%% file: iclr2026_conference.tex
\documentclass{article} 
\usepackage{iclr2026_conference,times}

\input{math_commands.tex}

\usepackage{xcolor}
\definecolor{citecolor}{RGB}{65, 105, 225} 

\usepackage[colorlinks=true,
            citecolor=citecolor,
            linkcolor=red,
            urlcolor=blue]{hyperref}
            
\usepackage{url}

\usepackage{caption}
\usepackage[table]{xcolor} 
\usepackage{multirow}
\usepackage{booktabs}
\usepackage{color}
\usepackage{multicol}
\usepackage{tcolorbox}
\usepackage{pifont}

\usepackage{graphicx}
\usepackage{subcaption}

\usepackage{algorithm}
\usepackage{algorithmic}

\newcommand{\adamhl}[1]{\colorbox{green!20}{#1}}
\newcommand{\adammhl}[1]{\colorbox{magenta!20}{#1}}

\usepackage{amsmath}  
\usepackage{amsthm}   
\usepackage{amssymb}   
\newtheorem{theorem}{Theorem}[section]
\newtheorem*{theorem*}{Theorem}
\newtheorem*{lemma*}{Lemma}
\newtheorem{lemma}[theorem]{Lemma}

\newtheorem{assumption}{Assumption}

\title{Taming Momentum: Rethinking Optimizer States Through Low-Rank Approximation}

\author{Zhengbo Wang$^{1,2}$\quad Jian Liang$^{2,3}$\thanks{Corresponding author.}\quad Ran He$^{2,3}$\quad Zilei Wang$^{1}$\quad Tieniu Tan$^{2,3,4}$ \\
\\
$^1$ University of Science and Technology of China \\
$^2$ NLPR \& MAIS, Institute of Automation, Chinese Academy of Sciences \\
$^3$ School of Artificial Intelligence, University of Chinese Academy of Sciences \\
$^4$ Nanjing University \\
\texttt{zhengbowang@mail.ustc.edu.cn, liangjian92@gmail.com} \\
}

\iclrfinalcopy 
\begin{document}

\maketitle

\begin{abstract}

Modern optimizers like Adam and Muon are central to training large language models, but their reliance on first- and second-order momenta introduces significant memory overhead, which constrains scalability and computational efficiency. 
In this work, we reframe the exponential moving average (EMA) used in these momenta as the training of a linear regressor via online gradient flow. 
Building on this equivalence, we introduce LoRA-Pre, a novel low-rank optimizer designed for efficient pre-training. 
Specifically, LoRA-Pre reduces the optimizer's memory footprint by decomposing the full momentum matrix into a compact low-rank subspace within the online linear learner, thereby maintaining optimization performance while improving memory efficiency. 
We empirically validate LoRA-Pre's efficacy by pre-training models from the Llama architecture family, scaling from 60M to 1B parameters. 
LoRA-Pre achieves the highest performance across all model sizes.
Notably, LoRA-Pre demonstrates remarkable rank efficiency, achieving comparable or superior results using only 1/8 the rank of baseline methods. 
Beyond pre-training, we evaluate LoRA-Pre's effectiveness in fine-tuning scenarios. 
With the same rank, LoRA-Pre consistently outperforms all efficient fine-tuning baselines.
Specifically, compared to standard LoRA, LoRA-Pre achieves substantial improvements of 3.14 points on Llama-3.1-8B and 6.17 points on Llama-2-7B, validating our approach's effectiveness across both pre-training and fine-tuning paradigms.
Our code is publicly available at \url{https://github.com/mrflogs/LoRA-Pre}.

\end{abstract}

\section{Introduction}
\label{sec:intro}

Large Language Models (LLMs)~\citep{guo2025deepseek, yang2025qwen3, grattafiori2024llama, brown2020language, comanici2025gemini, touvron2023llama, jaech2024openai} have become the centerpiece of modern deep learning.
Trained on trillions of tokens from heterogeneous sources and scaled to unprecedented parameter counts, they demonstrate remarkable generalization and transfer capabilities.
Beyond this, some LLMs have reasoning abilities and leverage external tools~\citep{guo2025deepseek, yang2025qwen3}.
These advances have transformed LLMs from statistical learners into versatile systems, driving breakthroughs across research and real-world applications.

However, the success of LLMs comes with formidable training and adaptation costs~\citep{grattafiori2024llama}.
The vast number of trainable parameters demands enormous memory and computational resources during pre-training and fine-tuning.
A key contributor to this burden lies in the optimizer states.
For instance, Adam~\citep{kinga2015adam}, the \emph{de facto} optimizer for training LLMs, maintains not only the model weights but also first- and second-order moment estimates of the gradients.
This triples memory usage and further exacerbates scalability bottlenecks, underscoring the urgent need for more efficient optimization strategies.

To address this, a series of low-rank optimization methods has emerged. 
One prominent line of research achieves optimizer state compression through projected gradient descent~\citep{zhao2023galore, chen2024fira, hao2024flora, modoranu2025svd}.
These methods initialize projection matrices via SVD or random mappings, project gradients into smaller subspaces for optimizer state computation, and then project them back to achieve parameter updates, thereby compressing the optimization overhead.
Additionally, such methods require periodic subspace updates to enable high-rank parameter updates following $W = \Delta W_{T_1} + \Delta W_{T_2} + \cdots$.
However, due to the inability to update subspaces instantly, error accumulation occurs in optimizer state computation, leading to suboptimal performance.
This motivates the need for a more dynamic approach that can rapidly adapt to changing gradient subspaces.

In this paper, we propose LoRA-Pre, a novel low-rank optimizer for LLM pre-training that addresses these limitations through a different approach.
Our key insight is an interesting mathematical connection between the exponential moving average (EMA) of momentum and linear regression.
Specifically, we demonstrate that EMA momentum updates are mathematically equivalent to training an online linear regressor with gradient descent on the online gradient flow:
\begin{equation}
    m\gets\beta\cdot m + (1 - \beta)\cdot g \quad \Longleftrightarrow \quad \min_{m} L(m;g) = \frac{1}{2}\cdot\|m - g\|^2_F,
\end{equation}
where $m \in \mathbb{R}^{p \times q}$ represents the momentum, $g$ is the online gradient, and $\beta$ is the momentum coefficient.
This equivalence reveals that momentum accumulation can be viewed as fitting a linear model to approximate the gradient history.

Leveraging this theoretical insight, we develop a memory-efficient optimizer through momentum compression via low-rank factorization.
Instead of maintaining the full momentum matrix $m$, we decompose it as the product of two low-rank matrices: $m = m_B \cdot m_A$, where $m_B \in \mathbb{R}^{p \times r}$ and $m_A \in \mathbb{R}^{r \times q}$, with $r\ll \min(p, q)$.
This factorization reduces memory complexity from $p\times q$ to $(p + q)\times r$, yielding substantial memory savings for large-scale models.
The low-rank momentum is then updated by solving $\min_{m_B, m_A}L(m_B,m_A;g) = \frac{1}{2}\cdot\|m_B m_A - g\|^2_F$, with explicit update rules derived in Theorem~\ref{thm:first_order}.

This theoretical framework enables us to compress any momentum-based optimizer. 
We demonstrate its versatility by developing LoRA-Pre variants for both Adam~\citep{kinga2015adam} and Muon~\citep{jordan2024muon} optimizers, with detailed algorithms provided in Appendix~\ref{sec:appendix_algorithm}.
Extensive experiments across pre-training and fine-tuning tasks validate the effectiveness of our method, while ablation studies demonstrate strong robustness across different rank variations.

Our main contributions are summarized as follows:
\begin{itemize}
    \item 
    We establish a novel theoretical connection showing that exponential moving average~(EMA) momentum updates are mathematically equivalent to training a linear regressor via online gradient flow.
    \item 
    Based on this insight, we propose LoRA-Pre, a memory-efficient low-rank optimizer for pre-training that compresses optimizer states by factorizing the momentum matrix into low-rank components.
    We construct LoRA-Pre variants for both Adam and Muon optimizers, mathematically derive their low-rank update rules through our regression formulation, and achieve substantial memory reduction while preserving optimization dynamics.
    \item 
    We provide extensive experimental validation across both pre-training and fine-tuning tasks, demonstrating that LoRA-Pre achieves superior performance with remarkable rank efficiency compared to existing baselines, confirming both the efficiency and effectiveness of our approach across diverse model scales and application scenarios.
\end{itemize}

\section{Related Works}
\label{sec:related}

\noindent\textbf{Low-Rank Adaptation.}
The scaling of Large Language Models (LLMs) has spurred the development of Parameter-Efficient Fine-Tuning (PEFT) methods~\citep{hu2022lora,liu2024dora,wang2025lora,ding2023parameter,liu2022p,liu2023pre,hayou2024lora+,wang2024loraga,edalati2023krona,zhang2023adaptive, tastan2025loft,wang2023improving,wang2024hard,wang2024connecting}, which aim to adapt pre-trained models to downstream tasks with reduced computational and memory overhead.
Among these PEFT methods, Low-Rank Adaptation (LoRA)~\citep{hu2022lora} and its variants~\citep{wang2025lora,wang2024loraga,hayou2024lora+,liu2024dora,yen2025lora} have emerged as the predominant methods in the field.

LoRA is grounded in the principle that weight updates during fine-tuning possess an intrinsic low-rank structure~\citep{aghajanyan2021intrinsic}. 
By re-parameterizing these updates as the product of two low-rank matrices, LoRA substantially reduces the number of trainable parameters while maintaining competitive performance, thereby enabling efficient adaptation of LLMs with limited computational resources.
The effectiveness of LoRA has inspired a line of research aimed at addressing its shortcomings.
For instance, LoRA+~\citep{hayou2024lora+} introduces differential learning rates for the two low-rank matrices to improve convergence and final task performance. 
DoRA~\citep{liu2024dora} decomposes pre-trained weights into magnitude and direction components, applying LoRA specifically to the directional component to better capture fine-tuning dynamics. 
Recent works like LoFT~\citep{tastan2025loft} and LoRA-Pro~\citep{wang2025lora} establish theoretical connections between LoRA and full fine-tuning via projected gradient equivalents.

While effective for fine-tuning, existing LoRA-based methods face fundamental challenges when applied to pre-training from scratch. 
Unlike fine-tuning, where small adaptations naturally exhibit low-rank structure, pre-training from random initialization requires full-rank weight updates to learn diverse representations across the entire parameter space~\citep{lialin2024relora, kamalakara2022exploring}. 
This mismatch between LoRA's low-rank assumption and pre-training's full-rank requirements results in suboptimal performance in the pre-training stage.

\noindent\textbf{Low-Rank Pre-Training.} 
The pre-training cost of LLMs has surged dramatically with the rapid expansion of model scale. 
A promising direction for mitigating these costs is compressing optimizer states into a low-rank subspace, a strategy that significantly reduces memory footprint and communication overhead~\citep{zhao2023galore,modoranu2025svd,ma2025swan,han2024sltrain,zmushko2025frugal,chen2024fira,hao2024flora,shen2025mlorc,mahdavinia2025lowrank,zhang2025adapm}. 
For instance, GaLore~\citep{zhao2023galore} utilizes Singular Value Decomposition (SVD) to project gradient information into a low-rank subspace for state compression, subsequently projecting the optimized gradients back for parameter updates. 
To enhance computational efficiency, Flora~\citep{hao2024flora} substitutes the expensive SVD operation with random projection, while Fira~\citep{chen2024fira} incorporates SGD momentum to leverage gradient information from the complementary subspace. 
However, these projection-based methods typically rely on \textit{periodic} subspace updates to amortize costs, which often results in optimization discontinuities and error accumulation due to the lag in subspace adaptation.

Recent works have explored online strategies to address these limitations. MLorc~\citep{shen2025mlorc} employs randomized SVD for online momentum compression. MoFaSGD~\citep{mahdavinia2025lowrank} utilizes momentum factorization to approximate full-rank momentum online, ensuring non-convex convergence. 
Similarly, ADAPM~\citep{zhang2025adapm} compresses first-order momentum into a low-rank subspace via linear regression. 
In contrast, our proposed LoRA-Pre fundamentally reformulates momentum maintenance as an online regression task. 
By directly evolving low-rank factors via online gradient flow at every step, our approach achieves continuous subspace adaptation, effectively eliminating the instabilities associated with periodic updates or heuristic approximations.

\section{Method}
\label{sec:method}
We begin by revisiting the \emph{de facto} standard optimizer, Adam \citep{kinga2015adam}, in Section~\ref{subsec:pre}. 
Then, we establish a connection between the exponential moving average and an online linear regressor over past gradients in Section~\ref{subsec:sor}. 
Finally, Section~\ref{subsec:lorapre} introduces our efficient optimizer, LoRA-Pre, which compresses optimizer states through low-rank parameterization.

\subsection{Preliminary}
\label{subsec:pre}
\begin{figure}[t]
    \centering
    \includegraphics[width=0.95\textwidth]{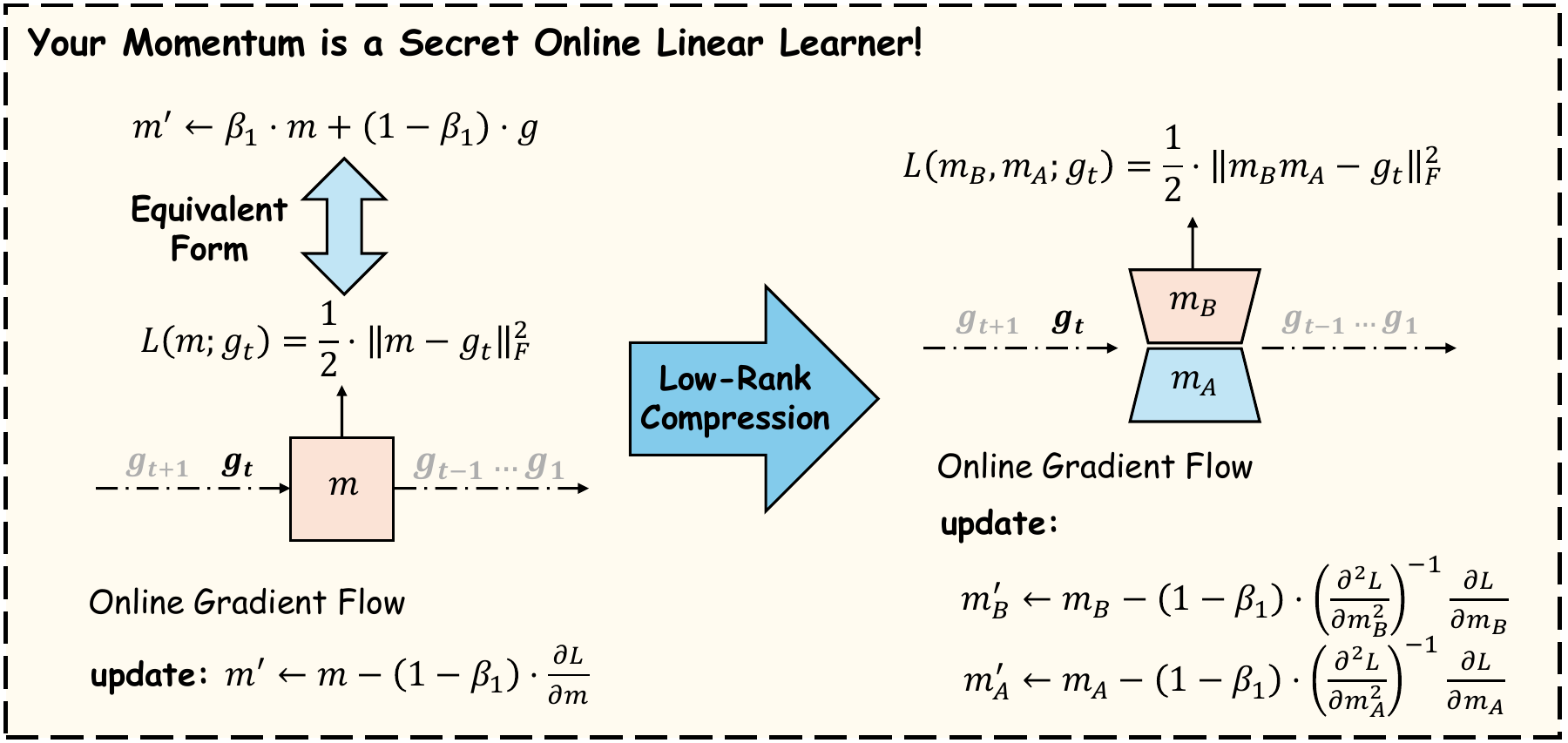}
    \caption{\textbf{Illustration of our LoRA-Pre method.} 
    In this work, we establish a novel connection: \textit{the exponential moving average (EMA) update for optimizer momentum is mathematically equivalent to training a linear regressor using online gradient descent}. 
    Leveraging this equivalence, we propose compressing the optimizer states (i.e., the momenta) using low-rank matrices to reduce the memory footprint. 
    Finally, the closed-form update rules for these matrices without requiring back-propagation are given by Theorem~\ref{thm:first_order}.
    }
    \label{fig:method}
\end{figure}

We first review Adam~\citep{kinga2015adam}, the \emph{de facto} optimizer in modern deep learning, which combines the benefits of AdaGrad~\citep{duchi2011adaptive} and RMSProp \citep{hinton2012neural} by maintaining estimates of the first and second moments of gradients to achieve adaptive learning rates and robust performance.

Consider an optimization problem where $x_t \in \mathcal{X}$ represents a data point drawn from a distribution $p_{data}$, $\mathcal{L}(\cdot): \mathcal{X} \to \mathbb{R}$ is a loss function, and $\theta \in \mathbb{R}^d$ are the optimized parameters.
The Adam optimizer \citep{kinga2015adam} updates $\theta$ according to the following steps:
\begin{align}
g_t & = \frac{\partial \mathcal{L}(x_t)}{\partial \theta}, \quad x_t \sim p_{data}(x), \qquad & \text{(Gradient Computation)} \label{eq:gradient_computation} \\
m_t & = \beta_1\cdot m_{t-1} + (1 - \beta_1)\cdot g_t, \qquad & \text{(EMA of the First Moment)} \label{eq:ema_first_moment} \\
v_t & = \beta_2\cdot v_{t-1} + (1 - \beta_2)\cdot g_t^2, \qquad & \text{(EMA of the Second Moment)} \label{eq:ema_second_moment} \\
\hat{m}_t & = \frac{m_t}{1 - \beta_1^t}, \quad \hat{v}_t = \frac{v_t}{1 - \beta_2^t}, \qquad & \text{(Bias-Correction)} \label{eq:bias_correction} \\
\theta_{t+1} & = \theta_t - \frac{\gamma}{\sqrt{\hat{v}_t} + \epsilon} \cdot \hat{m}_t. \qquad & \text{(Parameter Update)} \label{eq:parameter_update}
\end{align}
Here, $m_t$ and $v_t$ represent the Exponential Moving Average (EMA) of the first- and second-order moments, respectively.
The hyper-parameters include the learning rate $\gamma$, the exponential decay rates $\beta_1,\beta_2 \in [0,1)$ for the moment estimates, and a small constant $\epsilon>0$ for numerical stability.

Similar to the Adam optimization process, momentum also plays a critical role in other modern optimizers~\citep{shazeer2018adafactor,jordan2024muon}, enhancing stability and convergence.
However, storing momentum states introduces significant memory overhead.
Our work directly addresses this by compressing the momentum term to reduce the optimizer's memory footprint.

\subsection{Your Momentum is a Secret Online Linear Regressor}
\label{subsec:sor}
To begin with, we reveal an interesting connection: momentum updates in modern optimizers are secretly performing online linear regression.
Specifically, updating the momentum $m$ via EMA is mathematically equivalent to optimizing $m$ as the parameter of a linear regressor using online gradient flow.

To illustrate this, consider the first-order momentum as an example.
The standard EMA update for the first-order momentum can be rewritten as follows:
\begin{align}    
    m_{t + 1} & = \beta\cdot m_t + (1 - \beta)\cdot g, \\
              & = \underbrace{m_t}_{weight} - \underbrace{(1 - \beta)}_{lr}\cdot \underbrace{(m_t - g)}_{gradient}. \label{eq:ema_opt}
\end{align}
As shown in Equation~(\ref{eq:ema_opt}), the EMA update is mathematically equivalent to a gradient descent step where the parameter being optimized is the momentum $m$, the learning rate is $1 - \beta$, and the gradient is $\frac{\partial L(m_t; g)}{\partial m} = m_t - g$.
This reformulation reveals that EMA updates essentially function as an online linear regressor that continuously adjusts the momentum weights based on incoming gradients. 
The underlying objective being minimized is:
\begin{equation}
\label{eq:full_mse}
    \min_{m} L(m; g) = \frac{1}{2}\cdot\|m - g\|^2_F.
\end{equation}
This insight opens a new avenue for optimizer footprint optimization: since momentum parameters are linear model weights, we can apply standard model compression techniques to reduce optimizer memory usage during training.

\subsection{LoRA-Pre: Low-Rank Online Linear Regression}
\label{subsec:lorapre}
We now introduce LoRA-Pre, a new low-rank optimizer for pre-training. 
Building on the equivalence between exponential moving averages and online linear regression, LoRA-Pre compresses the momentum term $m$ via a low-rank factorization, inspired by the LoRA technique \citep{hu2022lora}. 
This approach can be applied to any momentum-based optimizer, such as Adam \citep{kinga2015adam} and Muon \citep{jordan2024muon}. 
We detail the compression strategies for both first- and second-order momentum terms below.

\noindent\textbf{First-Order Momentum Compression.}
Having established that momentum updates are equivalent to gradient descent on the objective $\min_{m} L(m; g) = \frac{1}{2}\cdot\|m - g\|^2_F$ in Section~\ref{subsec:sor}, we can now apply low-rank compression to reduce memory usage.
Instead of storing and updating the full momentum matrix $m\in\mathbb{R}^{p\times q}$ directly, we decompose it into the product of two low-rank matrices $m_B\in\mathbb{R}^{p\times r}$ and $m_A\in\mathbb{R}^{r\times q}$, $r\ll\min(p,q)$, i.e., $m = m_B\cdot m_A$.
This factorization transforms our original optimization problem into:
\begin{equation}
\label{eq:fm_obj}
\min_{m_B, m_A} L(m_B, m_A; g) = \frac{1}{2}\cdot \|m_B m_A - g\|^2_F.
\end{equation}
To maintain memory efficiency, we solve this optimization problem using standard gradient descent on the factorized matrices $m_B$ and $m_A$. 
To ensure computational efficiency, we derive closed-form update rules for these matrices without requiring back-propagation, which is given by Theorem~\ref{thm:first_order}.
We resort to Newton's method for updating since the solution can be expressed in the form of EMA.

\begin{tcolorbox}[colback=gray!20,colframe=gray]
\begin{theorem}\label{thm:first_order}
Assume both matrices $m_B\in\mathbb{R}^{p\times r}$ and $m_A\in\mathbb{R}^{r\times q}$ are full rank. 
For the objective $\min_{m_B, m_A} L(m_B, m_A; g) = \frac{1}{2}\cdot \|m_B m_A - g\|^2_F$, Newton's method yields the following closed-form update rules:
\begin{align}
    m_B &\gets (1-\gamma_1)\cdot m_B + \gamma_1\cdot g{m_A}^T({m_A}m_A^T)^{-1}, \\
    m_A &\gets (1-\gamma_1)\cdot m_A + \gamma_1\cdot (m_B^Tm_B)^{-1}m_B^Tg. 
\end{align}
Here, $\gamma_1$ is the learning rate for the factorized optimization problem.
\end{theorem}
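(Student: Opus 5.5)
The plan is to read ``Newton's method'' as a block-coordinate (alternating) Newton step: each factor is updated with its own Hessian while the other factor is held fixed. Under this reading each subproblem is a convex quadratic least-squares problem, so its Newton step lands exactly on the minimizer. A damped step with size $\gamma_1$ then interpolates between the current iterate and that minimizer, which gives exactly the EMA form in the statement.

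First I will compute the partial gradients of $L(m_B,m_A;g)=\tfrac12\|m_Bm_A-g\|_F^2$:
\begin{align}
\nabla_{m_B}L &= (m_Bm_A-g)m_A^T, \\
\nabla_{m_A}L &= m_B^T(m_Bm_A-g).
\end{align}
Next I will identify the Hessian of each block with the other factor fixed. For fixed $m_A$, the map $m_B\mapsto L$ is quadratic, and its Hessian acts on a direction $\Delta$ by $\Delta\mapsto \Delta\, m_Am_A^T$; in vectorized form this is $(m_Am_A^T)\otimes I_p$. Similarly, for fixed $m_B$ the Hessian in $m_A$ acts by $\Delta\mapsto m_B^Tm_B\,\Delta$, which is $I_q\otimes(m_B^Tm_B)$. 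The full-rank assumption, with $m_B$ of rank $r$ and $m_A$ of rank $r$ and $r\le\min(p,q)$, makes $m_Am_A^T$ and $m_B^Tm_B$ invertible $r\times r$ matrices. So both Hessians are positive definite and their inverses are obtained by right- or left-multiplying by $(m_Am_A^T)^{-1}$ or $(m_B^Tm_B)^{-1}$.

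With these in hand, the damped Newton step for $m_B$ is
\[
m_B-\gamma_1(m_Bm_A-g)m_A^T(m_Am_A^T)^{-1}.
\]
Since $m_Bm_Am_A^T(m_Am_A^T)^{-1}=m_B$, this simplifies to $(1-\gamma_1)m_B+\gamma_1\, g m_A^T(m_Am_A^T)^{-1}$. The symmetric computation, using $(m_B^Tm_B)^{-1}m_B^Tm_Bm_A=m_A$, gives the $m_A$ rule. As a sanity check, I will note that $gm_A^T(m_Am_A^T)^{-1}$ is the exact least-squares solution of $\min_{m_B}\|m_Bm_A-g\|_F^2$, so the update really is an EMA toward the blockwise optimum. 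This mirrors Equation~(\ref{eq:ema_opt}).

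The main obstacle is interpretive rather than computational. The joint Hessian over $(m_B,m_A)$ has off-diagonal blocks and is not in general invertible, because of the $GL(r)$ gauge symmetry $m_B\mapsto m_BS$, $m_A\mapsto S^{-1}m_A$. A literal full Newton step therefore does not give these formulas. I will state explicitly that the theorem uses the block-diagonal (per-factor) Newton approximation, which is Gauss--Newton-like, and justify it by the fact that each block subproblem is exactly quadratic. I will also point out that the two updates may be applied using the current values of the other factor, either simultaneously or sequentially, and both choices yield the stated closed forms.
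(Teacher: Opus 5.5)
Your proposal is correct and follows essentially the paper's proof: the paper also computes the two partial gradients, uses only the per-factor Hessians $H_{BB}=m_Am_A^T\otimes I_p$ and $H_{AA}=I_q\otimes m_B^Tm_B$, solves for each block's Newton direction, and takes the damped step $x\gets x-\gamma_1 d_x$, so your explicit block-diagonal (per-factor) reading states precisely what the paper does implicitly. One small correction to your side remark: the matrix that is always singular along the gauge directions $(m_BX,-Xm_A)$ is the Gauss--Newton matrix $J^TJ$ (with $J$ the Jacobian of $(m_B,m_A)\mapsto m_Bm_A$), whereas the full joint Hessian is singular there at stationary points but generically invertible elsewhere, so the real reason a joint Newton step gives different formulas is its nonzero off-diagonal coupling blocks, not non-invertibility.
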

\begin{proof}
    See Appendix~\ref{sec:appendix_proof}.
\end{proof}
\end{tcolorbox}

\noindent\textbf{Second-Order Momentum Compression.}
The compression of second-order momentum $v$ presents additional challenges due to the constraints imposed by Adam's parameter update rule. 
Since Equation~(\ref{eq:parameter_update}) requires the square root of momentum, i.e., $\sqrt{v}$, the second-order momentum must be element-wise positive.

A naive approach would parameterize the second momentum as $v=v_B\cdot v_A$ and optimize using the regression loss $L(v_B, v_A; g) = \frac{1}{2}\cdot \|v_B v_A - g^2\|^2_F$.
From Theorem~\ref{thm:first_order}, we derive the corresponding parameter update rule:
\begin{align}
    v_B &\gets (1-\gamma_2)\cdot v_B + \gamma_2\cdot g^2v_A^T(v_Av_A^T)^{-1}, \\
    v_A &\gets (1-\gamma_2)\cdot v_A + \gamma_2\cdot (v_B^Tv_B)^{-1}v_B^Tg^2.  
\end{align}
Unfortunately, this approach cannot guarantee that $v_{i,j}>0, \forall i,j$, making the computation of $\sqrt{v} = \sqrt{v_B v_A}$ problematic.

To address this issue, we re-parameterize the second-order momentum as $v=(v_B\, v_A)^{\circ2}$, where~$\circ$ denotes the Hadamard product. 
This re-parameterization ensures element-wise positivity while maintaining the low-rank structure.
We then formulate the optimization of low-rank matrices $v_B$ and $v_A$ as:
\begin{equation}
    \min_{v^B, v^A} L(v_B, v_A; g) = \frac{1}{2}\cdot \|v_B v_A - |g|\|^2_F.
\end{equation}
Its update rule can thus be directly derived from Theorem~\ref{thm:first_order}.
\begin{align}
    v_B &\gets (1-\gamma_2)\cdot v_B + \gamma_2\cdot |g|v_A^T(v_Av_A^T)^{-1}, \\
    v_A &\gets (1-\gamma_2)\cdot v_A + \gamma_2\cdot (v_B^Tv_B)^{-1}v_B^T
    |g|.  
\end{align}

\noindent\textbf{Low-Rank Optimizer Algorithms.}
As shown before, our method can be applied to any optimizer with momentum to compress its optimizer state during pre-training and fine-tuning stages.
The detailed pseudo-codes of LoRA-Pre optimizer for AdamW~\citep{kinga2015adam} and Muon~\citep{jordan2024muon} are provided in Appendix~\ref{sec:appendix_algorithm}.

\section{Experimental Results}
\label{sec:exp}
In this section, we present extensive experiments to evaluate the effectiveness of our proposed method, LoRA-Pre. 
Our evaluation encompasses both memory-efficient pre-training and memory-efficient fine-tuning on downstream tasks.

We begin by assessing LoRA-Pre's pre-training capabilities in Section~\ref{subsec:exp_pre}. Following the experimental setup of GaLore~\citep{zhao2023galore}, we train Llama~\citep{touvron2023llama} models from scratch with varying model sizes of 60M, 130M, 350M, and 1B parameters. 
All models are trained on the Colossal Clean Crawled Corpus (C4) dataset~\citep{raffel2020exploring}, a large-scale cleaned dataset specifically designed for language model pre-training. 
To simulate realistic pre-training conditions, the models are trained on sufficiently large volumes of data without repetition.

Subsequently, we evaluate LoRA-Pre's fine-tuning performance in Section~\ref{subsec:exp_ft}. 
We fine-tune both Llama-3.1-8B~\citep{grattafiori2024llama} and Llama-2-7B~\citep{touvron2023llama} models on a 100k subset sampled from the MetaMathQA dataset~\citep{yu2024metamath}. 
The fine-tuned models are then evaluated on the GSM8K~\citep{cobbe2021training} and MATH-500~\citep{lightman2024let} datasets. 
Finally, we present an ablation study of LoRA-Pre in Appendix~\ref{subsec:exp_ablation}.

\noindent\textbf{Implementation Details.}
To ensure a fair comparison, we align the experimental setup with that of GaLore~\citep{zhao2023galore}. 
By default, LoRA-Pre is applied to all parameters in the attention and MLP layers, while other parameters are optimized using the standard Adam~\citep{kinga2015adam} optimizer. 
We set the default ranks for the 60M, 130M, 350M, and 1B parameter models to 128, 256, 256, and 512, respectively. 
The optimal learning rate is selected from the set \{0.01, 0.005, 0.001, 0.0005, 0.0001\} based on validation perplexity. 
To maintain strict fairness in comparison, we retain the same scale factor of 0.25 as used in GaLore~\citep{zhao2023galore}.
For memory-efficient fine-tuning tasks, we set the default rank as 8 and set the learning rate as $2e-5$ by default. 

\subsection{Memory-Efficient Pre-training}
\label{subsec:exp_pre}

\setlength{\tabcolsep}{6pt}
\begin{table}[htbp]
  \centering
  \caption{
    Comparison with low-rank algorithms on pre-training various sizes of Llama models on the C4 dataset.
    We report the validation perplexity ($\downarrow$) on a hold-out C4 test set.
    The best and second-best performance within the low-rank optimizers are highlighted with \textbf{bold} and \underline{underline}.
    $*$ denotes the results are reproduced by ourselves.
    }
    \begin{tabular}{l|cccc}
    \toprule
    \multicolumn{1}{l|}{Model Size} & 60M & 130M & 350M & 1B \\
    $r / d_{\text{model}}$ & {128 / 512} & {256 / 768} & {256 / 1024} & {512 / 2048} \\
    Training Tokens & {1.1B} & {2.2B} & {6.4B} & {13.1B} \\
    \midrule
    Adam~\citep{kinga2015adam}  & 34.09  & 25.08  & 18.80  & 15.56  \\
    Muon~\citep{jordan2024muon}  & 28.43  & 21.86  & 16.17  & 13.41  \\
    \midrule
    GaLore~\citep{zhao2023galore} & 34.88  & 25.36  & 18.95  & 15.64  \\
    Low-Rank~\citep{kamalakara2022exploring} & 78.18  & 45.51  & 37.41  & 142.53  \\
    LoRA~\citep{hu2022lora}  & 34.99  & 33.92  & 25.58  & 19.21  \\
    ReLoRA~\citep{lialin2024relora} & 37.04  & 29.37  & 29.08  & 18.33  \\
    SLTrain~\citep{han2024sltrain} & 34.15  & 26.04  & 19.42  & 16.14  \\
    LORO~\citep{mo2025parameter}  & 33.96  & 24.59  & 18.84  & 15.19  \\
    Fira*~\citep{chen2024fira} & \underline{31.19\rlap{*}} & 24.51\rlap{*} & 17.22\rlap{*} & 14.31 \\
    \midrule
    \rowcolor[RGB]{204, 230, 230}
    \textbf{LoRA-Pre Adam} & {32.57}  & \underline{23.78}  & \textbf{16.36}  & \textbf{13.53}  \\
    \rowcolor[RGB]{204, 230, 230}
    \textbf{LoRA-Pre Muon} & \textbf{30.76}  & \textbf{23.05}  & \underline{16.97}  & \underline{13.92}  \\
    \bottomrule
    \end{tabular}%
  \label{tab:exp_pretrain}%
\end{table}%

In this section, we evaluate the pre-training performance of our proposed method, LoRA-Pre. 
Our experimental setup strictly follows that of GaLore~\citep{zhao2023galore}. 
We compare LoRA-Pre against several baseline methods, including both full optimizers and low-rank optimizers:
1) \textbf{Adam}~\citep{kinga2015adam}: The \emph{de facto} optimizer in modern deep learning that utilizes first- and second-order momentum statistics to dynamically adjust learning rates and stabilize training.
2) \textbf{Muon}~\citep{jordan2024muon}: A novel preconditioned optimizer that updates parameters by orthogonalizing the first-order momentum.
3) \textbf{GaLore}~\citep{zhao2023galore}: A low-rank optimizer that projects gradients using SVD and computes optimizer states in a reduced subspace.
4) \textbf{Low-Rank}~\citep{kamalakara2022exploring}: A traditional low-rank approach that directly represents weights through learnable low-rank factorization $W = BA$.
5) \textbf{LoRA}~\citep{hu2022lora}: The most widely adopted low-rank method for fine-tuning that factorizes weights as $W = W_0 + BA$.
For pre-training scenarios, we maintain $W_0$ as the full-rank initialization matrix.
6) \textbf{ReLoRA}~\citep{lialin2024relora}: A LoRA variant designed for pre-training that periodically merges $BA$ into $W$ and initializes $BA$ with optimizer state resets.
7) \textbf{SLTrain}~\citep{han2024sltrain}: A sparse plus low-rank approach that parameterizes weights as $W = S + BA$, where both components are jointly optimized.
8) \textbf{LORO}~\citep{mo2025parameter}: A method that optimizes LoRA parameters by strictly constraining updates within the low-rank manifold.
9) \textbf{Fira}~\citep{chen2024fira}: A method that improves upon GaLore with the Norm-Based Scaling and the Norm-Growth Limiter.

We pre-trained Llama series models of different sizes to evaluate LoRA-Pre against these baseline methods.
By default, all low-rank optimizers are built upon the Adam~\citep{kinga2015adam} optimizer foundation.
To demonstrate the generalizability of our approach, we also evaluate LoRA-Pre with Muon~(Algorithm~\ref{alg:muon_vs_muonm}), as our method is compatible with any momentum-based optimizer.

The results, presented in Table~\ref{tab:exp_pretrain}, demonstrate that our method achieves superior performance across multiple model scales. 
Specifically, LoRA-Pre Adam and LoRA-Pre Muon attain either the highest or second-highest performance across almost all four model sizes~(60M, 130M, 350M, and 1B), validating the effectiveness of our approach.
While Fira yields competitive results on the 60M model, LoRA-Pre consistently outperforms it on the larger scales (130M, 350M, and 1B), likely because our method avoids the error accumulation associated with Fira's projected gradients.
Furthermore, LoRA-Pre Adam outperforms the previous best efficient baselines by substantial margins of 0.81, 2.45, and 1.6 perplexity points for the 130M, 350M, and 1B models, respectively. 
Furthermore, when integrated with the Muon~\citep{jordan2024muon} optimizer, LoRA-Pre Muon achieves additional improvements on both 60M and 130M scale models, demonstrating our method's ability to generalize across different optimizers.

\subsection{Memory-Efficient Fine-Tuning}
\setlength{\tabcolsep}{2pt}
\begin{table}[t]
  \centering
  \caption{
  \textbf{Results of memory-efficient fine-tuning methods.}
  We compare our method with efficient fine-tuning methods including LoRA~\citep{hu2022lora}, rsLoRA~\citep{kalajdzievski2023rank}, and DoRA~\citep{liu2024dora}, and an efficient optimizer GaLore~\citep{zhao2023galore}.
  The models are fine-tuned on the MetaMath100k~\citep{yu2024metamath} dataset, and evaluated on GSM8K~\citep{cobbe2021training} and MATH-500~\citep{lightman2024let}.
  We highlight the best performance on Adam-like optimizer and Muon-like optimizer with \textbf{bold}.
  }
  \resizebox{\textwidth}{!}{
    \begin{tabular}{l|cc|c|cc|c}
    \toprule
    \multirow{2}[3]{*}{Method} & \multicolumn{3}{c|}{Llama-3.1-8B} & \multicolumn{3}{c}{Llama-2-7B} \\
\cmidrule{2-7}          & GSM8K & MATH-500 & Average & GSM8K & MATH-500 & Average \\
    \midrule
    LoRA~\citep{hu2022lora}  & 70.76  & 17.06  & 43.91  & 44.62  & 7.34  & 25.98  \\
    rsLoRA~\citep{kalajdzievski2023rank} & 71.06  & 17.46  & 44.26  & 48.79  & 5.75  & 27.27  \\
    DoRA~\citep{liu2024dora}  & 71.06  & 17.86  & 44.46  & 44.39  & 6.55  & 25.47  \\
    GaLore~\citep{zhao2023galore} & 65.08  & \textbf{18.65} & 41.87  & 36.44  & \textbf{8.33} & 22.39  \\
    \rowcolor[RGB]{204, 230, 230}
    \textbf{LoRA-Pre Adam} & \textbf{76.44} & 17.66  & \textbf{47.05} & \textbf{57.35} & 6.94  & \textbf{32.15} \\
    \midrule
    GaLore Muon~\citep{zhao2023galore} & 63.41  & 18.06  & 40.74  & 33.11  & 4.37  & 18.74  \\
    LoRA Muon~\citep{hu2022lora} & 70.30  & 19.25  & 44.78  & 35.15  & \textbf{6.15} & 20.65  \\
    \rowcolor[RGB]{204, 230, 230}
    \textbf{LoRA-Pre Muon} & \textbf{72.65} & \textbf{20.83} & \textbf{46.74} & \textbf{47.20} & \textbf{6.15} & \textbf{26.68} \\
    \bottomrule
    \end{tabular}%
    }
  \label{tab:exp_ft}%
\end{table}%

In this section, we evaluate the fine-tuning performance of LoRA-Pre on mathematical tasks.
We fine-tune Llama-2-7B and Llama-3.1-8B models on the MetaMath100k dataset and assess their performance on GSM8K~\citep{cobbe2021training} and MATH-500~\citep{lightman2024let}. 
To ensure a fair comparison, we maintain identical hyper-parameters and training configurations across all methods.

We select several memory-efficient fine-tuning baselines for comparison, including: 
1) \textbf{LoRA}~\citep{hu2022lora}: the standard low-rank fine-tuning method.
2) \textbf{rsLoRA}~\citep{kalajdzievski2023rank}: an improved LoRA variant that optimizes the scaling factor through rank-stabilized normalization.
3) \textbf{DoRA}~\citep{liu2024dora}: a LoRA extension that decomposes weight updates into magnitude and directional components for more effective optimization.
4) \textbf{GaLore}~\citep{zhao2023galore}: a memory-efficient optimizer that projects gradients into low-rank subspaces using SVD decomposition.
To demonstrate cross-optimizer compatibility, we evaluate Muon-based versions, including: 1) \textbf{GaLore~Muon}: which apply the GaLore~\citep{zhao2023galore} algorithm to the Muon optimizer, and 2) \textbf{LoRA~Muon}: which optimizes LoRA with the Muon optimizer.

The results are presented in Table~\ref{tab:exp_ft}.
LoRA-Pre consistently achieves the highest scores across all experimental configurations, demonstrating superior performance regardless of the base model or optimizer used.
The improvements are particularly notable across different settings: when training Llama-3.1-8B with Adam, LoRA-Pre shows an average improvement of 2.59 points over the second-best method, while with Llama-2-7B and Adam, this improvement increases to 4.88 points. 
When using the Muon optimizer, LoRA-Pre maintains its advantage with improvements of 1.96 and 6.03 points for the respective models. 
These results confirm LoRA-Pre's effectiveness across diverse experimental conditions and its robust compatibility with different optimizers.

\subsection{Ablation Study}
\label{subsec:exp_ablation}

\begin{figure}[thbp]
    \centering
    \begin{subfigure}{0.45\textwidth}
        \centering
        \includegraphics[width=\textwidth]{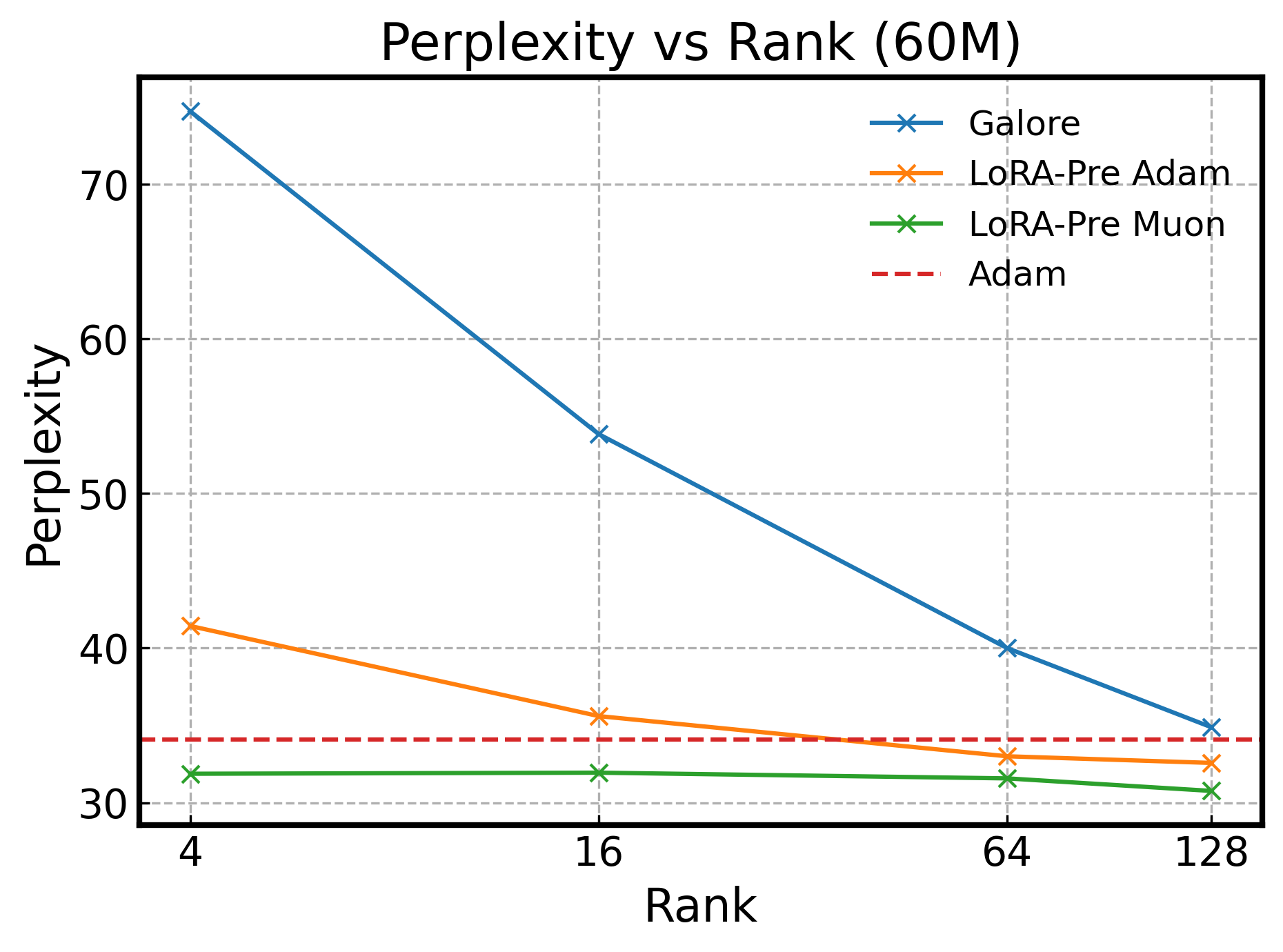}
        \caption{}
        \label{fig:subfig1}
    \end{subfigure}
    \hfill
    \begin{subfigure}{0.45\textwidth}
        \centering
        \includegraphics[width=\textwidth]{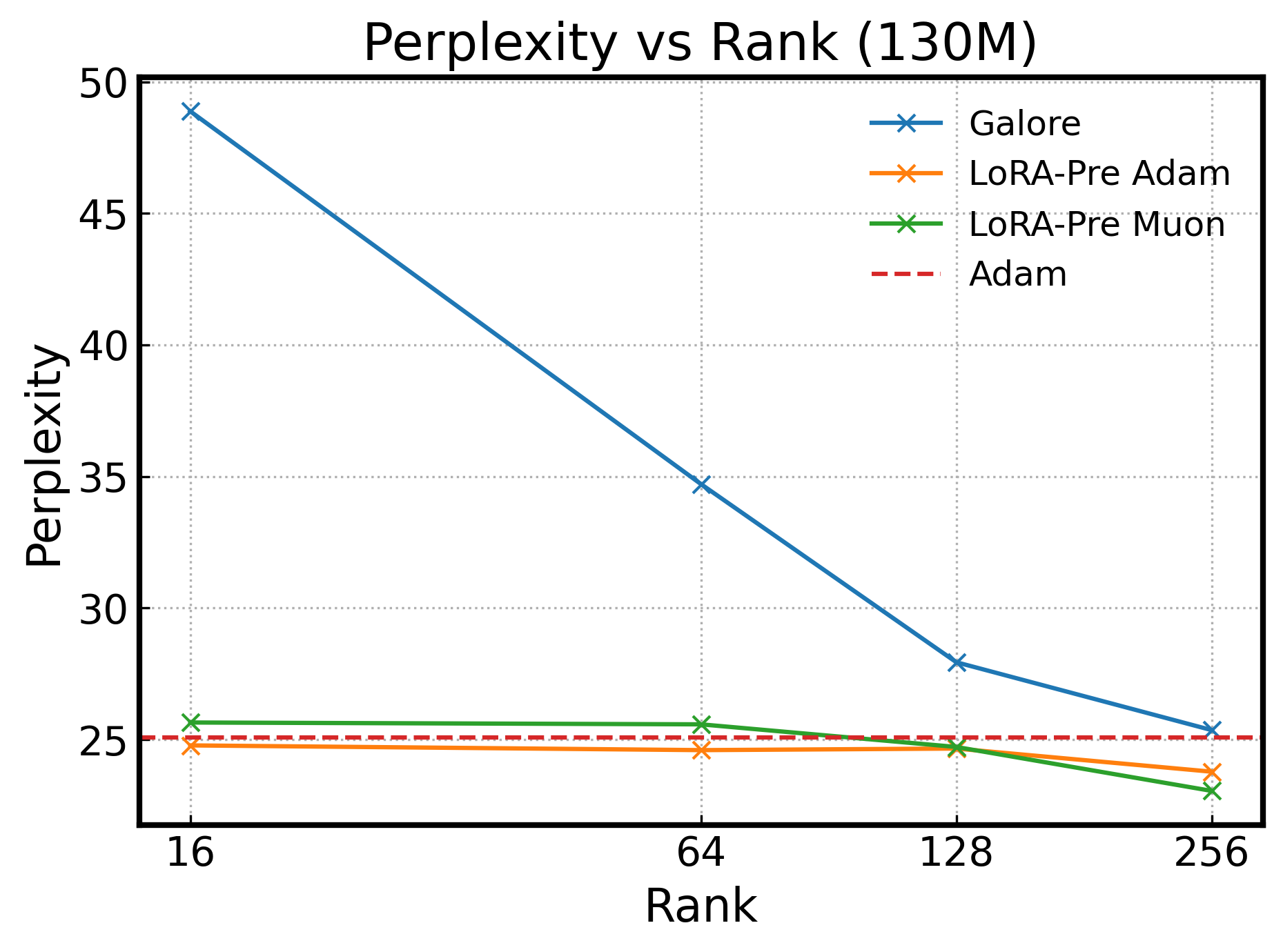}
        \caption{}
        \label{fig:subfig2}
    \end{subfigure}
    \caption{
    \textbf{Rank efficiency comparison across efficient optimization methods.} 
        Perplexity versus rank for 60M (left) and 130M (right) models, demonstrating LoRA-Pre's superior performance at lower ranks compared to baseline methods.
    }
    \label{fig:rank_variance}
\end{figure}

\noindent\textbf{Ablation of Different Rank.}
To systematically evaluate how rank selection affects the performance of LoRA-Pre compared to other efficient optimization methods, we conduct comprehensive experiments across different rank configurations. 
We evaluate LoRA-Pre (both Adam and Muon variants) against GaLore~\citep{zhao2023galore} on 60M and 130M parameter models. 
We test ranks of \{4, 16, 64, 128\} for the 60M model and \{16, 64, 128, 256\} for the 130M model to observe performance trends across different memory budgets.

Figure~\ref{fig:rank_variance} shows that all methods improve with increasing rank, but exhibit different rank efficiency. 
LoRA-Pre consistently achieves better perplexity at lower ranks compared to GaLore.
First, all methods show improved performance with increasing rank, but they differ significantly in their rank efficiency. 
When comparing specific configurations, the efficiency differences become clear. 
On the 60M model, LoRA-Pre Adam at rank=16 achieves comparable performance to GaLore at rank=128, representing an $8\times$ reduction in rank requirement. 
Similarly, on the 130M model, LoRA-Pre Adam at rank=16 matches GaLore's performance at rank=256, representing a $16\times$ efficiency improvement.
LoRA-Pre Muon shows higher rank tolerance than LoRA-Pre Adam. 
We attribute LoRA-Pre's rank efficiency to its continuous subspace adaptation mechanism. 
GaLore performs periodic subspace updates, creating intervals where the subspace becomes misaligned with the gradient structure. 
To compensate for this error accumulation, GaLore requires larger subspaces. 
In contrast, LoRA-Pre adjusts its subspace at each step, maintaining better alignment and thus achieving effective optimization with smaller subspaces.

To gain deeper insights into this rank efficiency, we examine the training dynamics of LoRA-Pre Muon across different rank configurations. 
Figure~\ref{fig:iter_perplexity_plot_130m} visualizes the perplexity trajectories for the 130M model with ranks of 256, 128, 64, and 16.

\begin{minipage}[t]{0.5\textwidth}
\vspace{0pt}
The results reveal an intriguing convergence pattern: while smaller ranks initially exhibit higher perplexity values, this performance gap diminishes rapidly as training progresses.
This behavior demonstrates that LoRA-Pre's dynamic subspace update mechanism can efficiently capture the evolving momentum structure during training, even when operating with constrained ranks.
This rapid adaptation capability explains why LoRA-Pre maintains competitive performance across a wide range of rank settings, making it both robust to rank selection and practically appealing for memory-constrained training scenarios.
\end{minipage}
\hfill
\begin{minipage}[t]{0.48\textwidth}
\vspace{0pt}
\centering
\includegraphics[width=\textwidth]{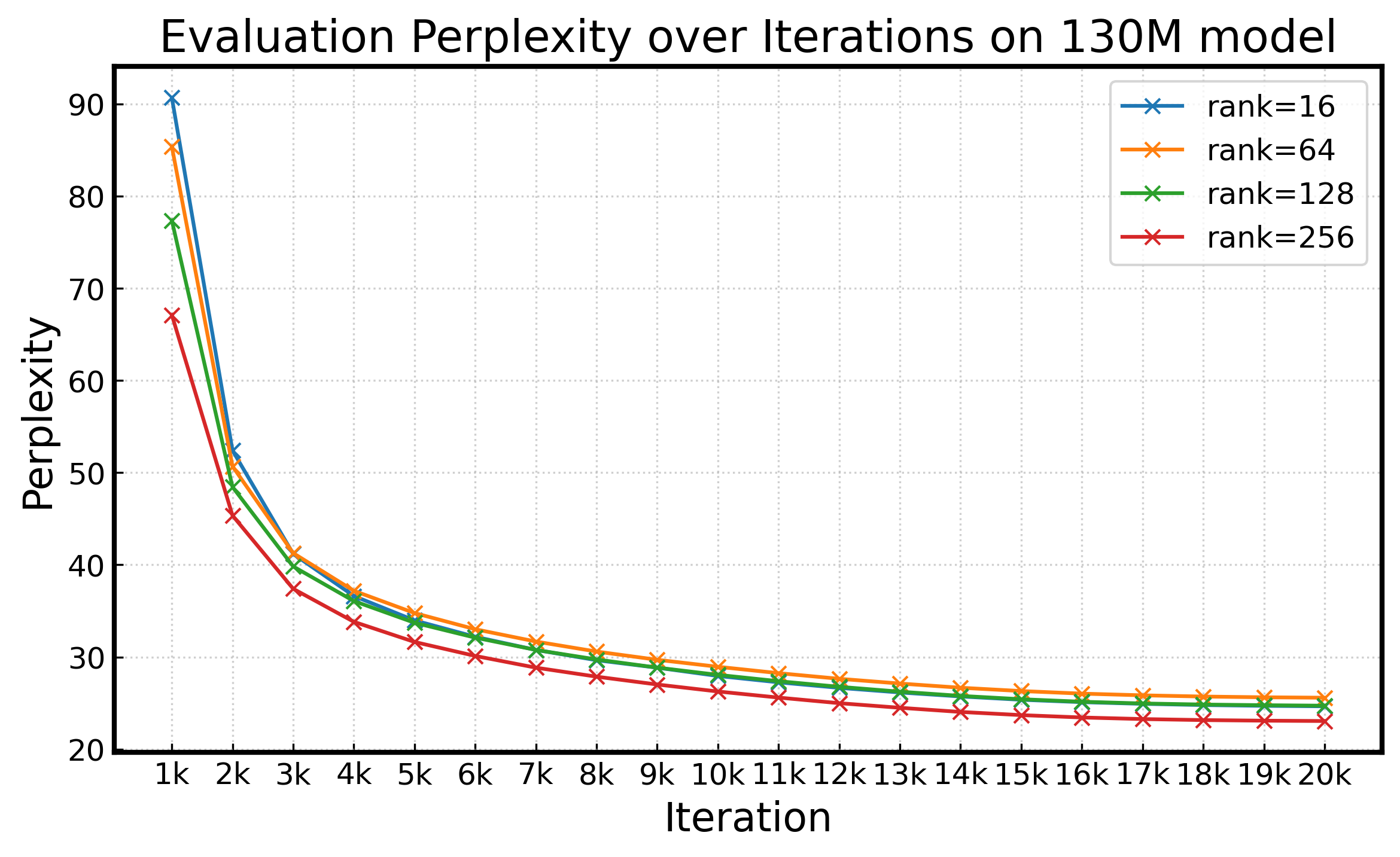} 
\captionof{figure}{Test perplexity for LoRA-Pre Muon with different ranks during training.}
\label{fig:iter_perplexity_plot_130m}
\end{minipage}

\setlength{\tabcolsep}{6pt}
\begin{table}[htbp]
  \centering
  \caption{\textbf{Results of pre-training using different efficient Muon optimizers.}}
  \resizebox{0.6\textwidth}{!}{
    \begin{tabular}{l|ccc}
    \toprule
    Model Size      & 60M   & 130M  & 350M \\
    \midrule
    Muon~\citep{jordan2024muon}  & 28.43  & 21.86  & 16.17  \\
    Muon w/o momentum & 32.15  & 24.23  & 17.33  \\
    \midrule
    GaLore Muon~\citep{zhao2023galore} & 34.39  & 25.16  & 19.24  \\
    Fira Muon~\citep{chen2024fira} & 34.45  & 24.85  & 17.40 \\
    \rowcolor[RGB]{204, 230, 230}
    \textbf{LoRA-Pre Muon} & \textbf{30.76}  & \textbf{23.05}  & \textbf{16.97}  \\
    \bottomrule
    \end{tabular}%
    }
  \label{tab:ablation_efficient_muon}%
\end{table}%
\noindent\textbf{Ablation of Low-Rank Muon Optimizers.}
In this section, we evaluate the effectiveness of current efficient optimizers by extending them to the recently proposed Muon optimizer~\citep{jordan2024muon}. 
Since existing efficient optimizers were originally designed for Adam~\citep{kinga2015adam}, their compatibility and performance with other optimizers remain unexplored. 
We conduct experiments on 60M, 130M, and 350M parameter models, comparing LoRA-Pre against GaLore~\citep{zhao2023galore} and Fira~\citep{chen2024fira} by adapting their implementations to use Muon. Standard Muon serves as the upper bound, while Muon without momentum provides the lower bound.
The Muon-based algorithm for LoRA-Pre is presented in Algorithm~\ref{alg:muon_vs_muonm}.

The results in Table~\ref{tab:ablation_efficient_muon} reveal two significant findings.
First, LoRA-Pre Muon consistently outperforms all other efficient optimizers, achieving improvements of 3.54, 1.80, and 0.43 points over the second-best method at 60M, 130M, and 350M parameters, respectively.
Second, projection-based methods surprisingly perform worse than basic Muon without momentum, despite incorporating momentum computation.
This counterintuitive result exposes fundamental generalization limitations of projection-based gradient descent methods when applied to different optimizers. 
We attribute this phenomenon to the periodic subspace updates in projection-based methods, which introduce momentum computation errors that subsequently affect Muon's orthogonal update calculations. In contrast, LoRA-Pre continuously updates its subspace, enabling better capture of the orthogonal space during Muon's update process and achieving superior performance.

\section{Conclusion}
\label{sec:conclusion}

In this paper, we present LoRA-Pre, a novel low-rank efficient optimizer. 
We establish that EMA momentum updates are mathematically equivalent to training an online linear regressor with gradient descent on the online gradient flow. 
Building on this insight, we propose compressing the momentum component through low-rank factorization, deriving update rules that maintain the EMA form while operating in a compressed parameter space.
We provide two variants: LoRA-Pre Adam and LoRA-Pre Muon. 
Extensive experiments on pre-training and fine-tuning tasks demonstrate that LoRA-Pre achieves competitive or superior performance across all evaluated tasks and model sizes.
Notably, our method exhibits excellent rank robustness, requiring only 1/8 or fewer ranks compared to previous methods while achieving comparable results. 
The approach generalizes effectively to various optimizers, making it a versatile solution for memory-efficient optimization.

\section*{Acknowledgement}
This work was funded by the National Natural Science Foundation of China under Grants~(62276256, U2441251, 62550062, 62425606, 32341009) and the Young Elite Scientists Sponsorship Program by CAST~(2023QNRC001).
We gratefully acknowledge Jie Cheng and Zhen Yang for providing computational resources and for their valuable discussions, which were critical to this work.

\bibliography{iclr2026_conference}
\bibliographystyle{iclr2026_conference}

\appendix
\newpage
\begin{center}
{\LARGE 
\textbf{Taming Momentum: Rethinking Optimizer States Through Low-Rank Approximation \\ $\;$ 
\\ ————Appendix————}
}
\end{center}

The structure of the Appendix is as follows,
\begin{itemize}
    \item Appendix~\ref{sec:appendix_proof} contains the proofs of the theorems in the main manuscript.
    \item Appendix~\ref{sec:appendix_algorithm} details the optimization algorithms of the proposed method.
    \item Appendix~\ref{sec:thm_convergence} provides theoretical analysis of approximation error and convergence of the proposed method.
    \item Appendix~\ref{sec:ablation_exp} provides additional experiments of our method.
    \item Appendix~\ref{sec:llm_statement} details the LLM usage in this paper.
\end{itemize}

\section{Proof of Theoretical Results}
\label{sec:appendix_proof}

\begin{tcolorbox}[colback=gray!20,colframe=gray]
\begin{theorem*}
Assume matrices $m_B\in\mathbb{R}^{p\times r}$ and $m_A\in\mathbb{R}^{r\times q}$ are both full rank. 
For the objective $\min_{m_B, m_A} L(m_B, m_A; g) = \frac{1}{2}\cdot \|m_B m_A - g\|^2_F$, Newton's method yields the following closed-form update rules:
\begin{align}
    m_B &\gets (1-\gamma_1)\cdot m_B + \gamma_1\cdot g{m_A}^T({m_A}m_A^T)^{-1}, \\
    m_A &\gets (1-\gamma_1)\cdot m_A + \gamma_1\cdot (m_B^Tm_B)^{-1}m_B^Tg. 
\end{align}
Here, $\gamma_1$ is the learning rate for the factorized optimization problem.
\end{theorem*}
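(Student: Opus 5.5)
The plan is to read "Newton's method" as a block-wise (alternating) Newton step. We take a Newton step in $m_B$ with $m_A$ held fixed, and symmetrically a Newton step in $m_A$ with $m_B$ held fixed. Each is then damped by the step size $\gamma_1$.

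The joint Hessian in $(m_B,m_A)$ is not positive definite in general, and it has the scaling gauge $m_B S, S^{-1}m_A$. So a full joint Newton step would not give a clean closed form. Working block-diagonally sidesteps this, and the full-rank hypothesis is exactly what makes each diagonal block invertible.

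\textbf{Step 1: gradients.} Write $R = m_Bm_A - g$. Then $\partial L/\partial m_B = R\,m_A^T$ and $\partial L/\partial m_A = m_B^T R$.

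\textbf{Step 2: Hessian blocks.} With $m_A$ fixed, $L$ is a quadratic in $m_B$. Its Hessian acts on a direction $\Delta$ by $\Delta \mapsto \Delta\, m_A m_A^T$; in vectorized form this is $(m_Am_A^T)\otimes I_p$.

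Because $m_A\in\mathbb{R}^{r\times q}$ has full row rank $r$, the matrix $m_Am_A^T$ is symmetric positive definite. The block Hessian is therefore invertible, and its inverse acts by right multiplication with $(m_Am_A^T)^{-1}$.

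By symmetry, the Hessian in $m_A$ acts by $\Delta\mapsto m_B^Tm_B\,\Delta$. This block is invertible because $m_B$ has full column rank $r$.

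\textbf{Step 3: damped Newton updates.} The damped Newton step for $m_B$ is
\[
m_B \gets m_B - \gamma_1 (m_Bm_A-g)m_A^T(m_Am_A^T)^{-1}.
\]
Expanding, $m_Bm_Am_A^T(m_Am_A^T)^{-1}=m_B$, so the update becomes
\[
m_B \gets (1-\gamma_1)m_B + \gamma_1\, g\, m_A^T(m_Am_A^T)^{-1}.
\]
The same computation for $m_A$ gives
\[
m_A \gets m_A-\gamma_1(m_B^Tm_B)^{-1}m_B^T(m_Bm_A-g)=(1-\gamma_1)m_A+\gamma_1(m_B^Tm_B)^{-1}m_B^Tg.
\]
These are precisely the stated rules.

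As a sanity check, setting $\gamma_1=1$ recovers the exact least-squares minimizer of each block subproblem, namely $g\,m_A^\dagger$ and $m_B^\dagger g$. This is consistent with Newton's method being exact on quadratics.

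\textbf{Main obstacle.} The only delicate point is justifying the block-diagonal interpretation of Newton's method. The full Hessian has cross terms $\Delta_B m_A$ and $m_B\Delta_A$, plus a term involving $R$.

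I would state explicitly that the updates are computed simultaneously from the current $(m_B,m_A)$, each using its own diagonal Hessian block, in the spirit of Gauss--Newton or alternating Newton. I would also note that this is what yields the EMA form that mirrors Equation~(\ref{eq:ema_opt}).

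The remaining detail is verifying the Kronecker-inverse identity. This is routine via $\mathrm{vec}(\Delta M)=(M^T\otimes I)\mathrm{vec}(\Delta)$.
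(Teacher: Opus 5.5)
Your proposal is correct and follows the paper's route: compute the two gradients, invert the diagonal Hessian blocks $(m_Am_A^T)\otimes I_p$ and $I_q\otimes m_B^Tm_B$ (invertible by the full-rank assumption), and take a damped step of size $\gamma_1$ along the resulting Newton directions to obtain the EMA form. Your explicit remark that this is a simultaneous block-diagonal Newton step, which discards the cross blocks of the joint Hessian, makes precise an interpretation the paper's proof uses only implicitly when it writes down just $H_{BB}$ and $H_{AA}$.
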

\end{tcolorbox}
\begin{proof}
    We aim to derive Newton's method update rules for the optimization problem $\min_{m_B, m_A} L(m_B, m_A; g) = \frac{1}{2}\cdot\|m_B m_A - g\|^2_F$. 
    Our approach begins with computing the first-order gradients, then proceeds to the Hessian computation, and finally establishes the connection to exponential moving average (EMA) updates.
    To start, we compute the first-order partial derivatives:
    \begin{align}
        \frac{\partial L}{\partial m_B} 
        &= (m_B m_A - g) m_A^T, \\
        \frac{\partial L}{\partial m_A} 
        &= m_B^T (m_B m_A - g).
    \end{align}
    While standard gradient descent would directly use these gradients to update the parameters, we instead pursue Newton's method because it yields a more elegant form that naturally resembles EMA updates.
    For Newton's method, we need the second-order derivatives (Hessian matrices). 
    Computing these second-order partial derivatives gives us:
    \begin{align}
        H_{BB} &= \frac{\partial^2L}{\partial m_B^2} = \frac{\partial \left[\left(m_B m_A - g\right) m_A^T\right]}{\partial m_B} = m_Am_A^T\otimes I_p, \\
        H_{AA} &= \frac{\partial^2L}{\partial m_A^2} = \frac{\partial \left[m_B^T\left(m_B m_A - g\right)\right]}{\partial m_A} = I_q\otimes m_B^Tm_B. 
    \end{align}
    Using these Hessian matrices, we can now compute the Newton directions by solving the linear systems $H \cdot d = \nabla L$, which yields:
    \begin{align}
        \text{vec}(d_{m_B}) &= H_{BB}^{-1} \text{vec}\left(\frac{\partial L}{\partial m_B}\right) \implies d_{m_B} = m_B - g m_A^T (m_A m_A^T)^{-1}, \\
        \text{vec}(d_{m_A}) &= H_{AA}^{-1} \text{vec}\left(\frac{\partial L}{\partial m_A}\right) \implies d_{m_A} = m_A - (m_B^T m_B)^{-1} m_B^T g.
    \end{align}
    The key insight emerges when we apply these Newton directions with learning rate $\gamma_1$. 
    Substituting the Newton directions into the update formula $x \gets x - \gamma_1 d_x$, we obtain:
    \begin{align}
        m_B &\gets m_B - \gamma_1\cdot d_{m_B} \nonumber \\
            &= m_B - \gamma_1\cdot \left[ m_B - g m_A^T (m_A m_A^T)^{-1} \right] \nonumber \\
            &= (1-\gamma_1)\cdot m_B + \gamma_1\cdot g m_A^T (m_A m_A^T)^{-1}, \\
        m_A &\gets m_A - \gamma_1\cdot d_{m_A} \nonumber \\
            &= m_A - \gamma_1\cdot \left[ m_A - (m_B^T m_B)^{-1} m_B^T g \right] \nonumber \\
            &= (1-\gamma_1)\cdot m_A + \gamma_1\cdot (m_B^T m_B)^{-1} m_B^T g.
    \end{align}
    These final expressions reveal the remarkable property that Newton's method naturally produces update rules in the form of exponential moving averages, where each new parameter value is a weighted combination of the previous value and a target value derived from the optimization objective.

    To further illustrate this connection, we note that in the uncompressed case where we optimize $\min_m L(m; g) = \frac{1}{2}\|m - g\|^2_F$, Newton's method similarly yields the classic EMA update:
    \begin{align}
        m &\gets m - \gamma\cdot H_{mm}^{-1} \frac{\partial L}{\partial m} \nonumber \\
          &= (1 - \gamma)\cdot m + \gamma\cdot g.
    \end{align}
    This consistency across problem formulations demonstrates the fundamental nature of this EMA-like structure in Newton's method and justifies our preference for this approach over standard gradient descent.

\end{proof}

\section{Detailed Algorithms of LoRA-Pre for Adam and Muon Optimizers}
\label{sec:appendix_algorithm}

This section presents the LoRA-Pre algorithms for both Adam~\citep{kinga2015adam} and Muon~\citep{jordan2024muon} optimizers.

\subsection{Algorithm of LoRA-Pre for Adam}

The Adam optimizer update rules under LoRA-Pre have been established in Section~\ref{sec:method}.

\noindent
\textbf{First-order momentum updates}: 
For the first-order momentum term with parameterization $m=m_B m_A$, the update rules are:
\begin{align}
    m_B' &\gets (1-\gamma_1)\cdot m_B + \gamma_1\cdot g{m_A}^T({m_A}m_A^T)^{-1}, \\
    m_A' &\gets (1-\gamma_1)\cdot m_A + \gamma_1\cdot (m_B^Tm_B)^{-1}m_B^Tg. 
\end{align}
By default, we set $1 - \gamma_1 = \sqrt{\beta_1}$, which ensures that after the update, $m' = m_B' m_A' = \beta_1\cdot m_B m_A +~\cdots$, making the EMA coefficient consistent with standard Adam.

\noindent
\textbf{Second-order momentum updates}: For the second-order momentum term with parameterization $v=(v_B v_A)^{\circ 2}$, the update rules are:
\begin{align}
    v_B' &\gets (1-\gamma_2)\cdot v_B + \gamma_2\cdot |g|v_A^T(v_Av_A^T)^{-1}, \\
    v_A' &\gets (1-\gamma_2)\cdot v_A + \gamma_2\cdot (v_B^Tv_B)^{-1}v_B^T
    |g|\,.  
\end{align}
Analogously, we set $1 - \gamma_2 = \beta_2^{0.25}$ by default, which ensures that $v' = (v_B' v_A')^{\circ 2} = \beta_2\cdot (v_B v_A)^{\circ 2} + ~\cdots$.

\noindent
\textbf{Complete algorithm}: Based on these update formulas, Algorithm~\ref{alg:adam_vs_adamm} presents the complete LoRA-Pre implementation for the Adam optimizer, demonstrating how these factorized momentum updates integrate seamlessly into the standard Adam framework.

\begin{algorithm}
\caption{Comparison of \adamhl{Adam} and \adammhl{Adam with LoRA-Pre}}
\label{alg:adam_vs_adamm}
\begin{algorithmic}[1]
\REQUIRE Initial learning rate $\gamma$, weight decay $\lambda$, $\beta_1,\beta_2 \in [0,1)$, \adammhl{$\gamma_1,\gamma_2 \in [0,1)$}, $\epsilon > 0$
\STATE Initialize parameters $\theta_0$, time step $t \gets 0$, \\
\adamhl{first moment $m_0 \gets 0$, second moment $v_0 \gets 0$}, \\
\adammhl{first low-rank moment $m_{B,0} \gets 0$, $m_{A,0}\sim \mathcal{N}(0,0.02)$,} \\
\adammhl{second low-rank moment $v_{B,0} \gets 0$, $v_{A,0}\sim \mathcal{N}(0,0.02)$.}
\REPEAT
    \STATE $t \gets t+1$
    \STATE $g_t \gets \nabla_{\theta} \mathcal{L}_t(\theta_{t-1})$

    \STATE \# Update first moment
    \STATE \adamhl{$m_{t}\gets \beta_1\cdot m_{t-1} + (1 - \beta_1)\cdot g_t$}
    \STATE \adammhl{$m_{t}\gets \beta_1\cdot m_{B,t-1} m_{A,t-1} + (1 - \beta_1)\cdot g_t$}
    \STATE \adammhl{$m_{B,t} \gets (1 - \gamma_1)\cdot m_{B,{t - 1}} + \gamma_1\cdot g_t m_{A,{t-1}}^T(m_{A,{t-1}}m_{A,{t-1}}^T)^{-1}$}
    \STATE \adammhl{$m_{A,t} \gets (1 - \gamma_1)\cdot m_{A,{t - 1}} + \gamma_1\cdot (m_{B,{t-1}}^Tm_{B,{t-1}})^{-1}m_{B,{t-1}}^Tg_t$}
    
    \STATE \# Update second moment
    \STATE \adamhl{$v_t \gets \beta_2\cdot v_{t-1} + (1-\beta_2)\cdot g_t^{\circ 2}$ }
    \STATE \adammhl{$v_{t}\gets \beta_2\cdot(v_{B,t-1} v_{A,t-1})^{\circ 2} + (1 - \beta_2)\cdot g_t^{\circ 2}$}
    \STATE \adammhl{$v_{B,t} \gets (1 - \gamma_2)\cdot v_{B,{t - 1}} + \gamma_2\cdot |g_t|\cdot v_{A,{t-1}}^T(v_{A,{t-1}}v_{A,{t-1}}^T)^{-1}$} 
    \STATE \adammhl{$v_{A,t} \gets (1 - \gamma_2)\cdot v_{A,{t - 1}} + \gamma_2\cdot(v_{B,{t-1}}^Tv_{B,{t-1}})^{-1}v_{B,{t-1}}^T\cdot|g_t|$} 
    \STATE $\hat{m}_t \gets m_t / (1-\beta_1^t)$ 
    \STATE $\hat{v}_t \gets v_t / (1-\beta_2^t)$ 
    \STATE $\theta_t \gets \theta_{t-1} - \gamma \Bigg( \frac{\hat{m}_t}{\sqrt{\hat{v}_t+\epsilon}} + \lambda\,\theta_{t-1} \Bigg)$
\UNTIL stopping criterion is met
\RETURN Optimized parameters $\theta_t$
\end{algorithmic}
\end{algorithm}

\subsection{Algorithm of LoRA-Pre for Muon}

In this section, we present the LoRA-Pre algorithm for the Muon~\citep{jordan2024muon} optimizer.

\noindent
\textbf{First-order momentum updates}:
For the Muon optimizer, we derive the LoRA-Pre algorithm by first reformulating the momentum update. The Muon momentum term can be equivalently written as:
\begin{align}
    m'  & = \mu\cdot m + g \nonumber \\
        & = m - (1 - \mu)\cdot m + g \nonumber \\
        & = m - (1 - \mu)\cdot (m - g) + \mu\cdot g \nonumber \\
        & = m - (1 - \mu)\cdot \left[ (m - g) - \frac{\mu}{1 - \mu}\cdot g \right]\,.
\end{align}
By treating the Muon update as the solution to an optimization problem, we can derive the equivalent objective function:
\begin{equation}
    L(m; g) = \frac{1}{2}\cdot \|m - g\|^2_F - \frac{\mu}{1 - \mu}\cdot\langle m, g\rangle_F.
\end{equation}
After applying low-rank factorization $m=m_B\cdot m_A$, the objective becomes:
\begin{equation}
    L(m_B, m_A; g) = \frac{1}{2}\cdot \|m_B m_A - g\|^2_F - \frac{\mu}{1 - \mu}\cdot\langle m_B m_A, g\rangle_F.
\end{equation}

We aim to derive Newton's method update rules for this modified objective.
Computing the first-order gradients:
\begin{align}
    \frac{\partial L}{\partial m_B} 
        &= (m_B m_A - g) m_A^T - \frac{\mu}{1 - \mu}\cdot g m_A^T, \\
    \frac{\partial L}{\partial m_A} 
        &= m_B^T (m_B m_A - g)  - \frac{\mu}{1 - \mu}\cdot m_B^T g.
\end{align}
The Hessian matrices have the same structure as before since the additional linear term doesn't affect the second derivatives:
\begin{align}
    H_{BB} &= \frac{\partial^2L}{\partial m_B^2} = \frac{\partial \left[(m_B m_A - g) m_A^T\right]}{\partial m_B} = m_A m_A^T \otimes I_p, \\
    H_{AA} &= \frac{\partial^2L}{\partial m_A^2} = \frac{\partial \left[m_B^T(m_B m_A - g)\right]}{\partial m_A} = I_q \otimes m_B^T m_B. 
\end{align}
Using these Hessian matrices, we can now compute the Newton directions by solving the linear systems $H \cdot d = \nabla L$, which yields:
\begin{align}
    \text{vec}(d_{m_B}) &= H_{BB}^{-1} \text{vec}\left(\frac{\partial L}{\partial m_B}\right) \implies d_{m_B} = m_B - \frac{1}{1 - \mu}\cdot g{m_A}^T({m_A}m_A^T)^{-1}, \\
    \text{vec}(d_{m_A}) &= H_{AA}^{-1} \text{vec}\left(\frac{\partial L}{\partial m_A}\right) \implies d_{m_A} = m_A - \frac{1}{1 - \mu}\cdot (m_B^Tm_B)^{-1}m_B^Tg.
\end{align}
The key insight emerges when we apply these Newton directions with learning rate $\gamma_1$. 
Substituting the Newton directions into the update formula $x \gets x - \gamma_1 \cdot d_x$, we obtain:
\begin{align}
    m_B &\gets m_B - \gamma_1 \cdot d_{m_B} \nonumber \\
        &= m_B - \gamma_1 \cdot \left[ m_B - \frac{1}{1 - \mu}\cdot g m_A^T (m_A m_A^T)^{-1} \right] \nonumber \\
        &= (1-\gamma_1)\cdot m_B + \frac{\gamma_1}{1 - \mu}\cdot g m_A^T (m_A m_A^T)^{-1}, \\
    m_A &\gets m_A - \gamma_1 \cdot d_{m_A} \nonumber \\
        &= m_A - \gamma_1 \cdot \left[ m_A - \frac{1}{1 - \mu}\cdot (m_B^T m_B)^{-1} m_B^T g \right] \nonumber \\
        &= (1-\gamma_1)\cdot m_A + \frac{\gamma_1}{1 - \mu}\cdot (m_B^T m_B)^{-1} m_B^T g.
\end{align}
Similarly, we set $1 - \gamma_1 = \sqrt{\mu}$.

\noindent
\textbf{Complete algorithm}: Based on these update formulas, Algorithm~\ref{alg:muon_vs_muonm} presents the complete LoRA-Pre implementation for the Muon optimizer, demonstrating how these factorized momentum updates integrate seamlessly into the standard Muon framework.

\begin{algorithm}
\caption{Comparison of \adamhl{Muon} and \adammhl{Muon with LoRA-Pre}}
\label{alg:muon_vs_muonm}
\begin{algorithmic}[1]
\REQUIRE Initial learning rate $\gamma$, weight decay $\lambda$, momentum $\mu\in [0,1)$, \adammhl{$\gamma_1\in [0,1)$}
\STATE Initialize parameters $\theta_0$, time step $t \gets 0$, \\
\adamhl{first moment $m_0 \gets 0$,} \\
\adammhl{first low-rank moment $m_{B,0} \gets 0$, $m_{A,0}\sim \mathcal{N}(0,0.02)$.} \\
\REPEAT
    \STATE $t \gets t+1$
    \STATE $g_t \gets \nabla_{\theta} \mathcal{L}_t(\theta_{t-1})$ 
    \STATE \# Update first moment
    \STATE \adamhl{$m_{t}\gets \mu\cdot m_{t-1} + g_t$}
    \STATE \adammhl{$m_{t}\gets \mu\cdot m_{B,t-1} m_{A,t-1} + g_t$}

    \STATE \adammhl{$m_{B,t} \gets (1 - \gamma_1)\cdot m_{B,{t - 1}} + \frac{\gamma_1}{1-\mu}\cdot g_t m_{A,{t-1}}^T(m_{A,{t-1}}m_{A,{t-1}}^T)^{-1}$} 
    \STATE \adammhl{$m_{A,t} \gets (1 - \gamma_1)\cdot m_{A,{t - 1}} + \frac{\gamma_1}{1 - \mu}\cdot (m_{B,{t-1}}^Tm_{B,{t-1}})^{-1}m_{B,{t-1}}^Tg_t$} 
    \STATE $O_t \gets \operatorname{NewtonSchulz5}(m_t)$
    \STATE $\theta_t \gets \theta_{t-1} - \gamma \Bigg( O_t + \lambda\,\theta_{t-1} \Bigg )$
\UNTIL stopping criterion is met
\RETURN Optimized parameters $\theta_t$
\end{algorithmic}
\end{algorithm}

\section{Theoretical Analysis of Approximation Error and Convergence}
\label{sec:thm_convergence}

In this appendix, we provide a rigorous theoretical analysis of the \textbf{LoRA-Pre Adam} optimizer. 
We explicitly analyze the approximation error introduced by the low-rank factorization of the optimizer states, and prove the convergence fidelity of the algorithm in non-convex settings.

\subsection{Problem Setup and Algorithm Dynamics}

Consider the unconstrained optimization problem $\min_{\theta \in \mathbb{R}^{p\times q}} f(\theta)$. Let $g_t = \nabla f(\theta_{t-1})$ be the stochastic gradient at step $t$.
We denote the states of \textbf{Standard Adam} as $m_t, \, v_t$ and the effective states of \textbf{LoRA-Pre Adam} as $\tilde{m}_t,\, \tilde{v}_t$.

\paragraph{1. Standard Adam Dynamics}
The standard optimizer updates its moments using exponential moving averages (EMA) with decay rates $\beta_1, \beta_2 \in [0, 1)$:
\begin{align}
    m_t &= \beta_1\cdot m_{t-1} + (1-\beta_1)\cdot g_t, \\
    v_t &= \beta_2\cdot v_{t-1} + (1-\beta_2)\cdot g_t^{\circ 2}.
\end{align}

\paragraph{2. LoRA-Pre Dynamics}
LoRA-Pre maintains low-rank factors $(m_{B,t}, m_{A,t})$ to approximate the gradient history. 
Let $\gamma_1$ be the update rate. 
The exact simultaneous update rules (Online Least Squares) can be compactly expressed using the Moore-Penrose pseudo-inverse $(\cdot)^\dagger$:
\begin{align}
    m_{B,t} &= (1-\gamma_1)\cdot m_{B,t-1} + \gamma_1\cdot g_t m_{A,t-1}^\dagger \label{eq:mb_update}, \\
    m_{A,t} &= (1-\gamma_1)\cdot m_{A,t-1} + \gamma_1\cdot m_{B,t-1}^\dagger g_t \label{eq:ma_update},
\end{align}
where the Tikhonov regularized pseudo-inverses for the full-rank factors are defined as $m_A^\dagger = m_A^T (m_A m_A^T + \lambda\cdot I)^{-1}$ and $m_B^\dagger = (m_B^T m_B +\lambda\cdot I)^{-1} m_B^T$ with damping factor $\lambda > 0$.

We define the canonical projection operators associated with the factors at step $t-1$:
\begin{itemize}
    \item $\mathcal{P}_{A} \triangleq m_{A,t-1}^\dagger m_{A,t-1}$ (Projection onto Row Space of $m_A$).
    \item $\mathcal{P}_{B} \triangleq m_{B,t-1} m_{B,t-1}^\dagger$ (Projection onto Column Space of $m_B$).
\end{itemize}
Let $\widehat{m}_t = m_{B,t} m_{A,t}$ be the low-rank history reconstruction.
Analogous updates apply to the second moment factors using the gradient magnitude $|g_t|$, producing the reconstruction $\widehat{h}_t=v_{B,t}v_{A,t}$.

\paragraph{3. Effective Moments for Update}
Crucially, LoRA-Pre computes the effective moments for the parameter update by combining the low-rank history with the \textit{exact} current gradient:
\begin{align}
    \tilde{m}_t &= \beta_1\cdot \widehat{m}_{t-1} + (1-\beta_1)\cdot g_t \label{eq:mt_tilde}, \\
    \tilde{v}_t &= \beta_2\cdot (\widehat{h}_{t-1})^{\circ 2} + (1-\beta_2)\cdot g_t^{\circ 2} \label{eq:vt_tilde}.
\end{align}
Note that for the second moment, LoRA-Pre approximates the history of magnitudes $\widehat{h}$ and then squares it.

\subsection{Assumptions}
\begin{assumption}[Regularity and Boundedness]
\label{assump:regularity}
The objective function and stochastic gradients satisfy the following conditions:
\begin{enumerate}
    \item \textbf{L-Smoothness:} The objective function $f$ is $L$-smooth: $\|\nabla f(x) - \nabla f(y)\|_F \le L \|x - y\|_F$.
    \item \textbf{Bounded Gradients:} The stochastic gradients are uniformly bounded in both Frobenius and infinity norms. There exist constants $G$ and $G_{\infty}$ such that for all $t$, $\|g_t\|_F \le G$ and $\|g_t\|_\infty \le G_{\infty}$.
    \item \textbf{Lipschitz Smooth Update:} The optimizer uses a smoothed damping term $\epsilon > 0$. The update mapping is defined as $\phi(m, v) = \frac{m}{\sqrt{v+\epsilon}}$. This function is Lipschitz continuous with respect to both arguments, with constants $L_m = \frac{1}{\sqrt{\epsilon}}$ and $L_v = \frac{G}{2\epsilon^{1.5}}$.
\end{enumerate}
\end{assumption}

\begin{assumption}[Subspace Approximation Capability]
\label{assump:rank}
The gradient dynamics admit a low-rank structure. Crucially, we assume this structure holds for both the gradient direction and its element-wise magnitude. 
Let $\mathcal{P}_{B,t}, \mathcal{P}_{A,t}$ denote the projections onto the subspaces maintained by the optimizer at step $t$. 
We assume there exists a bound $\delta \ge 0$ such that:
\begin{align}
    \| g_t - (\mathcal{P}_{B,t} g_t + g_t \mathcal{P}_{A,t}) \|_F &\le \delta, \\
    \| |g_t| - (\mathcal{P}_{B,t} |g_t| + |g_t| \mathcal{P}_{A,t}) \|_F &\le \delta.
\end{align}
The second inequality ensures that the second-moment estimator (based on $|g_t|$) also admits a bounded reconstruction error.
\end{assumption}

\begin{assumption}[Reference Optimizer Descent]
\label{assump:descent}
Let $u_t = m_t / (\sqrt{v_t} + \epsilon)$ be the update direction of the standard full-rank Adam optimizer. We assume that in expectation, $u_t$ is a valid descent direction aligned with the true gradient:
\begin{equation}
    \mathbb{E}[\langle \nabla f(\theta_t), u_t \rangle] \ge c \mathbb{E}[\|\nabla f(\theta_t)\|_F^2],
\end{equation}
for some constant $c > 0$. This assumption anchors the convergence of LoRA-Pre Adam to the theoretical behavior of standard Adam.
\end{assumption}

\subsection{Boundedness of Factor Reconstruction Error}

We first prove that the error of the stored low-rank history $\widehat{m}_t$ is uniformly bounded. 
We strictly enforce the time-scale alignment condition: $\beta_1 = (1-\gamma_1)^2$.

\begin{lemma}
\label{lemma:recursion}
Let $\mathcal{E}^m_t = \|m_t - \widehat{m}_t\|_F$. Under Assumptions \ref{assump:regularity} and \ref{assump:rank}, $\mathcal{E}^m_t$ is uniformly bounded by a constant $\mathcal{E}_{bound}$.
\end{lemma}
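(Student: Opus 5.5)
The plan is to derive a one-step recursion for $\mathcal{E}^m_t$ of contraction form
\[
\mathcal{E}^m_t \le \beta_1\,\mathcal{E}^m_{t-1} + C,
\]
with $C$ independent of $t$. Unrolling it then gives $\mathcal{E}^m_t \le \beta_1^t\mathcal{E}^m_0 + C/(1-\beta_1)$. Since $m_0=0$ and $m_{B,0}=0$ force $\widehat m_0=0$, we have $\mathcal{E}^m_0=0$, and we may take $\mathcal{E}_{bound}=C/(1-\beta_1)$.

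\textbf{Expanding the product.} Write $\rho = 1-\gamma_1$, so that $\rho^2=\beta_1$ by the time-scale alignment condition. Multiplying the updates (\ref{eq:mb_update}) and (\ref{eq:ma_update}) gives
\begin{align*}
\widehat m_t &= m_{B,t} m_{A,t} \\
&= \rho^2\,\widehat m_{t-1} + \rho\gamma_1\bigl(g_t\,m_{A,t-1}^\dagger m_{A,t-1} + m_{B,t-1}m_{B,t-1}^\dagger\, g_t\bigr) + \gamma_1^2\, g_t\, m_{A,t-1}^\dagger m_{B,t-1}^\dagger g_t .
\end{align*}
The middle term is $\rho\gamma_1(g_t\mathcal{P}_A + \mathcal{P}_B g_t)$. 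Subtracting this from $m_t = \beta_1 m_{t-1} + (1-\beta_1)g_t$ and noting $1-\beta_1 = \gamma_1(1+\rho)$, we obtain
\begin{align*}
m_t - \widehat m_t &= \beta_1 (m_{t-1}-\widehat m_{t-1}) + \rho\gamma_1\bigl[g_t - (\mathcal{P}_B g_t + g_t\mathcal{P}_A)\bigr] \\
&\quad + \gamma_1 g_t - \gamma_1^2\, g_t m_{A,t-1}^\dagger m_{B,t-1}^\dagger g_t .
\end{align*}

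\textbf{Bounding each term.} The triangle inequality handles the pieces as follows.
\begin{itemize}
\item The first term contributes $\beta_1\mathcal{E}^m_{t-1}$.
\item The bracketed term is at most $\rho\gamma_1\delta$ by Assumption~\ref{assump:rank}.
\item The term $\gamma_1 g_t$ is at most $\gamma_1 G$ by Assumption~\ref{assump:regularity}.
\item For the quadratic cross term, the Tikhonov damping gives $\|m_A^\dagger\|_2\le 1/(2\sqrt\lambda)$ and $\|m_B^\dagger\|_2\le 1/(2\sqrt\lambda)$, from the scalar bound $\sigma/(\sigma^2+\lambda)\le 1/(2\sqrt\lambda)$. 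This yields $\gamma_1^2 G^2/(4\lambda)$.
\end{itemize}
Together these give $C = \rho\gamma_1\delta + \gamma_1 G + \gamma_1^2G^2/(4\lambda)$, which is independent of $t$.

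\textbf{Main obstacle.} The hard step is the cross term $g_t m_A^\dagger m_B^\dagger g_t$. Without the damping $\lambda$ it could blow up if the factors become ill-conditioned, which is exactly why the regularized pseudo-inverse is essential here. A secondary concern is the dimensional consistency of $g_t m_A^\dagger m_B^\dagger g_t$, which only makes sense in the square case $p=q$. I would avoid that issue by bounding $\|(g_t m_A^\dagger)(m_B^\dagger g_t)\|_F$ directly as the product of the two factor updates. That is the honest form of the $\gamma_1^2$ term and is bounded by $\gamma_1^2G^2/(4\lambda)$ in the same way.

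Note that the argument does not need $m_t$ itself to be bounded separately. The leftover $\gamma_1 g_t$ term, reflecting that the weights $\rho\gamma_1$ on the projected part do not sum to $1-\beta_1$, is simply absorbed into $C$.
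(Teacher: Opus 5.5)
Your proposal is correct and follows essentially the paper's own argument: the same expansion of $m_{B,t}m_{A,t}$, the same residual (your $C$ is exactly the paper's $\Delta_{res}$ once $\rho=\sqrt{\beta_1}$ and $\gamma_1=1-\sqrt{\beta_1}$), the same $1/(2\sqrt{\lambda})$ spectral bound on the damped pseudo-inverses, and the same geometric recursion, and your use of $\mathcal{E}^m_0=0$ (from $m_{B,0}=0$) in fact gives the bound for every $t$, whereas the paper states it only in the limit. One small correction: your dimensional worry is unfounded, since $g_t m_{A,t-1}^\dagger m_{B,t-1}^\dagger g_t$ is a product of $p\times q$, $q\times r$, $r\times p$ and $p\times q$ matrices, which is well-defined for any $p,q$ and equals $(g_t m_{A,t-1}^\dagger)(m_{B,t-1}^\dagger g_t)$ by associativity.
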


\begin{proof}
\textbf{Step 1: Exact Expansion of LoRA Dynamics}
Substitute the update rules (\ref{eq:mb_update}) and (\ref{eq:ma_update}) into $\widehat{m}_t = m_{B,t} m_{A,t}$:
\begin{align}
    \widehat{m}_t &= \left[ (1-\gamma_1) m_{B,t-1} + \gamma_1 g_t m_{A,t-1}^\dagger \right]\cdot \left[ (1-\gamma_1) m_{A,t-1} + \gamma_1 m_{B,t-1}^\dagger g_t \right] \nonumber \\
    &= (1-\gamma_1)^2 \widehat{m}_{t-1} + \gamma_1(1-\gamma_1) (\mathcal{P}_B g_t + g_t \mathcal{P}_A) + \gamma_1^2 Q_t,
\end{align}
where $Q_t = g_t m_{A,t-1}^\dagger m_{B,t-1}^\dagger g_t$ is the quadratic interaction term. 
Using the condition $\beta_1 = (1-\gamma_1)^2$, this implies $\gamma_1 = 1 - \sqrt{\beta_1}$. 
The expansion becomes:
\begin{equation}
    \widehat{m}_t = \beta_1 \widehat{m}_{t-1} + (1-\sqrt{\beta_1})\sqrt{\beta_1} (\mathcal{P}_B g_t + g_t \mathcal{P}_A) + (1-\sqrt{\beta_1})^2 Q_t.
\end{equation}

\textbf{Step 2: Constructing the Recursive Error}
We form the difference with the standard Adam update $m_t = \beta_1 m_{t-1} + (1-\beta_1) g_t$:
\begin{align}
    m_t - \widehat{m}_t &= \beta_1 (m_{t-1} - \widehat{m}_{t-1}) + R_t,
\end{align}
where the residual driving term $R_t$ is:
\begin{equation}
\label{eq:residual}
    R_t = (1-\beta_1) g_t - (1-\sqrt{\beta_1})\sqrt{\beta_1} (\mathcal{P}_B g_t + g_t \mathcal{P}_A) - (1-\sqrt{\beta_1})^2 Q_t.
\end{equation}

\textbf{Step 3: Bounding the Residual Magnitude}

We explicitly bound the norm of the driving residual $R_t$. 
Applying the triangle inequality to the residual definition~(\ref{eq:residual}), we decompose the total bound into linear and quadratic contributions:
\begin{equation}
    \|R_t\|_F \le \underbrace{\left\| (1-\sqrt{\beta_1}) \left[ g_t + \sqrt{\beta_1}(g_t - \mathcal{P}_B g_t - g_t \mathcal{P}_A) \right] \right\|_F}_{\text{Linear Term}} + \underbrace{(1-\sqrt{\beta_1})^2 \|Q_t\|_F}_{\text{Quadratic Term}}.
\end{equation}

\textit{1. The Linear Term:}
Using the homogeneity of the norm and the triangle inequality, followed by the substitution of the gradient bound $\|g_t\|_F \le G$ (Assumption \ref{assump:regularity}) and the subspace residual bound $\delta$ (Assumption \ref{assump:rank}), we derive:
\begin{align}
    \|\text{Linear}\|_F &= (1-\sqrt{\beta_1}) \left\| g_t + \sqrt{\beta_1}(g_t - \mathcal{P}_B g_t - g_t \mathcal{P}_A) \right\|_F \nonumber \\
    &\le (1-\sqrt{\beta_1}) \left( \|g_t\|_F + \sqrt{\beta_1} \|g_t - \mathcal{P}_B g_t - g_t \mathcal{P}_A\|_F \right) \nonumber \\
    &\le (1-\sqrt{\beta_1}) G + \sqrt{\beta_1}(1-\sqrt{\beta_1}) \delta.
\end{align}

\textit{2. The Quadratic Term:}
Bounding $\|Q_t\|_F$ requires the spectral norm of the regularized pseudo-inverses. As established in the setup, $m_A^\dagger = m_A^T (m_A m_A^T + \lambda I)^{-1}$ and $m_B^\dagger = (m_B^T m_B + \lambda I)^{-1} m_B^T$.
Let the Singular Value Decomposition (SVD) of the factor be $U \Sigma V^T$. Substituting this into either inverse form reveals that the singular values are governed by $h(\sigma) = \frac{\sigma}{\sigma^2 + \lambda}$.
Analyzing the function $f(x) = \frac{x}{x^2 + \lambda}$ for $x \ge 0$, we find its maximum at $x=\sqrt{\lambda}$, yielding the uniform spectral bound $\|M^\dagger\|_2 \le \frac{1}{2\sqrt{\lambda}}$.

Using the sub-multiplicativity of the Frobenius norm (specifically $\|ABC\|_F \le \|A\|_F \|B\|_2 \|C\|_F$), we strictly bound $Q_t$:
\begin{align}
    \|Q_t\|_F &= \|g_t m_{A,t-1}^\dagger m_{B,t-1}^\dagger g_t\|_F \nonumber \\
    &\le \|g_t\|_F \cdot \|m_{A,t-1}^\dagger\|_2 \cdot \|m_{B,t-1}^\dagger\|_2 \cdot \|g_t\|_F \nonumber \\
    &= \|g_t\|_F^2 \cdot \|m_{A,t-1}^\dagger\|_2 \cdot \|m_{B,t-1}^\dagger\|_2 \nonumber \\
    &\le G^2 \left(\frac{1}{2\sqrt{\lambda}}\right)^2 = \frac{G^2}{4\lambda} \triangleq C_Q.
\end{align}

Substituting both bounds back into the residual equation:
\begin{equation}
    \label{eq:res}
    \|R_t\|_F \le (1-\sqrt{\beta_1}) G + \sqrt{\beta_1}(1-\sqrt{\beta_1}) \delta + (1-\sqrt{\beta_1})^2 C_Q \triangleq \Delta_{res}.
\end{equation}

\textbf{Step 4: Convergence to Steady State}

The error dynamics follow the linear recursion inequality $\mathcal{E}^m_t \le \beta_1 \mathcal{E}^m_{t-1} + \Delta_{res}$. 
Since the momentum parameter satisfies $0 \le \beta_1 < 1$, this geometric series converges to a finite steady state as $t \to \infty$:
\begin{equation}
    \lim_{t \to \infty} \mathcal{E}^m_t \le \frac{\Delta_{res}}{1-\beta_1} \triangleq \mathcal{E}_{bound}.
\end{equation}
To derive the explicit bound, we substitute the expression for $\Delta_{res}$ derived in Equation~(\ref{eq:res}) and factor the denominator using $1-\beta_1 = (1-\sqrt{\beta_1})(1+\sqrt{\beta_1})$:
\begin{align}
    \mathcal{E}_{bound} &= \frac{(1-\sqrt{\beta_1}) G + \sqrt{\beta_1}(1-\sqrt{\beta_1}) \delta + (1-\sqrt{\beta_1})^2 C_Q}{(1-\sqrt{\beta_1})(1+\sqrt{\beta_1})} \nonumber \\
    &= \frac{(1-\sqrt{\beta_1}) \left[ G + \sqrt{\beta_1}\delta + (1-\sqrt{\beta_1})C_Q \right]}{(1-\sqrt{\beta_1})(1+\sqrt{\beta_1})}.
\end{align}
Canceling the common factor $(1-\sqrt{\beta_1})$ from the numerator and denominator, we obtain the final uniform bound:
\begin{equation}
    \mathcal{E}_{bound} = \frac{G + \sqrt{\beta_1}\delta + (1-\sqrt{\beta_1})C_Q}{1+\sqrt{\beta_1}}.
\end{equation}
This confirms that the reconstruction error is uniformly bounded by a constant determined by the gradient magnitude $G$, the subspace residual $\delta$, and the quadratic interaction $C_Q$.
\end{proof}

\subsection{Joint Effective Moment Error}

We now derive the error bounds for the effective moments $\tilde{m}_t$ and $\tilde{v}_t$ used in the parameter update, explicitly accounting for the non-linear square term in $\tilde{v}_t$.

\begin{lemma}
\label{lemma:joint_error}
Let $\Delta_m = \|m_t - \tilde{m}_t\|_F$ and $\Delta_v = \|v_t - \tilde{v}_t\|_F$ denote the effective moment errors. 
Let $d$ be the total number of parameters. 
The errors are bounded by:
\begin{align}
    \Delta_m &\le \beta_1 \mathcal{E}_{bound}, \\
    \Delta_v &\le \beta_2 \left( 2 G_{\infty} \mathcal{E}_{bound} + \sigma_{total}^2 \right),
\end{align}
where $\sigma_{total}^2 = \frac{\sqrt{d}}{4} G_\infty^2$ represents the intrinsic variance bound derived from Popoviciu's inequality.
\end{lemma}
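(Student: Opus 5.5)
The plan is to subtract the standard Adam recursions from the effective-moment definitions (\ref{eq:mt_tilde}) and (\ref{eq:vt_tilde}) term by term. The current-gradient contributions $(1-\beta_1)g_t$ and $(1-\beta_2)g_t^{\circ 2}$ are identical in both, so they cancel exactly. Only the history terms remain, and we bound those using Lemma~\ref{lemma:recursion}.

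\textbf{First moment.} Since $m_t = \beta_1 m_{t-1} + (1-\beta_1)g_t$ and $\tilde m_t = \beta_1\widehat m_{t-1} + (1-\beta_1)g_t$, we get $m_t - \tilde m_t = \beta_1(m_{t-1}-\widehat m_{t-1})$. Lemma~\ref{lemma:recursion} then gives $\Delta_m = \beta_1\mathcal{E}^m_{t-1}\le \beta_1\mathcal{E}_{bound}$. To be careful, I would note that the recursion $\mathcal{E}^m_t\le\beta_1\mathcal{E}^m_{t-1}+\Delta_{res}$ with $\mathcal{E}^m_0=0$ (both states initialized at zero, since $m_{B,0}=0$) gives $\mathcal{E}^m_t\le \Delta_{res}/(1-\beta_1)$ for every $t$, not only in the limit. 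This is what makes the bound hold uniformly in $t$.

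\textbf{Second moment.} Here $v_t-\tilde v_t = \beta_2\bigl(v_{t-1}-\widehat h_{t-1}^{\circ 2}\bigr)$. The comparison object for $\widehat h$ is the EMA of magnitudes $h_{t-1}$, defined by $h_t=\beta_2 h_{t-1}+(1-\beta_2)|g_t|$. By Assumption~\ref{assump:rank} (second inequality), Lemma~\ref{lemma:recursion} applied verbatim to $|g_t|$ gives $\|h_{t-1}-\widehat h_{t-1}\|_F\le\mathcal{E}_{bound}$. I would split
\[
v_{t-1}-\widehat h_{t-1}^{\circ 2} = \bigl(v_{t-1}-h_{t-1}^{\circ 2}\bigr) + \bigl(h_{t-1}^{\circ 2}-\widehat h_{t-1}^{\circ 2}\bigr).
\]
The second piece factors as $(h_{t-1}+\widehat h_{t-1})\circ(h_{t-1}-\widehat h_{t-1})$, so its Frobenius norm is at most $\|h_{t-1}+\widehat h_{t-1}\|_\infty\,\mathcal{E}_{bound}$. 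Since $h_{t-1}$ is a convex combination of entries of $|g_s|$, we have $\|h_{t-1}\|_\infty\le G_\infty$. We need the same bound for $\widehat h$ in order to reach $2G_\infty\mathcal{E}_{bound}$.

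The first piece is an entrywise variance. For each entry, $v-h^{\circ2}$ is the weighted second moment minus the squared weighted mean of $|g_s|\in[0,G_\infty]$, up to bias-correction weights. Popoviciu's inequality bounds each entry by $G_\infty^2/4$. Summing in the Frobenius norm gives a bound that scales with the number of entries; the stated $\frac{\sqrt d}{4}G_\infty^2$ is exactly this entrywise cap aggregated over $d$ coordinates.

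\textbf{Main obstacle.} The delicate step is controlling $\|\widehat h_{t-1}\|_\infty$, because nothing in the factor dynamics forces the reconstruction to stay in $[0,G_\infty]$. My first attempt would be to bound $\|\widehat h\|_\infty\le\|h\|_\infty+\|h-\widehat h\|_F\le G_\infty+\mathcal{E}_{bound}$. This adds a term quadratic in $\mathcal{E}_{bound}$, which can be absorbed by redefining constants. To match the stated form exactly, I would instead argue (or adopt as part of the setup) that $\widehat h$ is effectively clipped to $[0,G_\infty]$. A second, minor issue is the normalization of the EMA weights $(1-\beta_2)\beta_2^{k}$, which sum to $1-\beta_2^{t}<1$. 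I would handle it either through the bias-correction or by noting that the deficit only makes the variance-type term smaller, since it adds a nonpositive correction.
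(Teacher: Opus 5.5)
Your argument is the paper's proof. You cancel the current-gradient terms to get $\Delta_m=\beta_1\mathcal{E}^m_{t-1}$. For the second moment you insert the exact magnitude EMA $h_{t-1}$ and split into a Popoviciu variance term and a squaring term controlled by Lemma~\ref{lemma:recursion}. Your remark that $\mathcal{E}^m_0=0$ gives the bound for every $t$, not only in the limit, tightens the paper's wording.

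The range obstacle you flag is not resolved in the paper either. The paper applies the Lipschitz constant $2G_\infty$ of $x\mapsto x^2$ on $[0,G_\infty]$ to both $h_{t-1}$ and $\widehat h_{t-1}$. That is, it tacitly assumes the entries of $\widehat h_{t-1}$ lie in $[0,G_\infty]$, which is exactly the clipping you propose to adopt. Without it, your first attempt proves only $\Delta_v\le\beta_2\bigl(2G_\infty\mathcal{E}_{bound}+\mathcal{E}_{bound}^2+\sigma_{total}^2\bigr)$. That is a weaker inequality, not the stated one with redefined constants. Also, like the paper, applying Lemma~\ref{lemma:recursion} ``verbatim'' to $|g_t|$ presumes the alignment $(1-\gamma_2)^2=\beta_2$, with $\beta_2$ replacing $\beta_1$ in $\mathcal{E}_{bound}$. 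The algorithm's default is instead $(1-\gamma_2)^4=\beta_2$.

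One step fails as you justify it: the treatment of the EMA mass deficit. Write $W=1-\beta_2^{t-1}$ for the total weight on $g_1,\dots,g_{t-1}$. Let $\mu_j$ and $V_j$ be the mean and variance of $|g_{s,j}|$ under the normalized weights. Then $v_{t-1,j}-h_{t-1,j}^2=W V_j+W(1-W)\mu_j^2$, so the deficit adds a \emph{nonnegative} term. For example, if all $|g_{s,j}|=G_\infty$, the normalized variance is $0$ but $v_{t-1,j}-h_{t-1,j}^2=W(1-W)G_\infty^2>0$. Bias correction does not rescue this either, since the lemma compares the uncorrected $v_t$ and $\tilde v_t$.

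The correct fix is that the zero initialization $v_0=h_0=0$ acts as an atom of mass $\beta_2^{t-1}$ at the value $0\in[0,G_\infty]$. With this atom the weights sum to one, and $v_{t-1,j}-h_{t-1,j}^2$ is exactly the variance of a random variable supported on $[0,G_\infty]$. Popoviciu then gives $G_\infty^2/4$ directly. This is the reading under which the paper's claim that the EMA weights ``sum to 1'' is correct.
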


\begin{proof}
\textbf{1. First Moment Error Analysis}

We compare the effective first moment $\tilde{m}_t$ with the standard Adam moment $m_t$. Subtracting their definitions:
\begin{equation}
    m_t - \tilde{m}_t = \beta_1 (m_{t-1} - \widehat{m}_{t-1}) + (1-\beta_1)(g_t - g_t) = \beta_1 (m_{t-1} - \widehat{m}_{t-1}).
\end{equation}
The current gradient terms cancel out. Taking the Frobenius norm and applying Lemma \ref{lemma:recursion}, we obtain:
\begin{equation}
    \Delta_m \le \beta_1 \mathcal{E}_{bound}.
\end{equation}

\textbf{2. Second Moment Error Analysis}

We analyze the error in the second moment estimate.
Standard Adam updates the second moment as $v_t = \beta_2 v_{t-1} + (1-\beta_2) g_t^{\circ 2}$.
LoRA-Pre updates the effective second moment as $\tilde{v}_t = \beta_2 (\widehat{h}_{t-1})^{\circ 2} + (1-\beta_2) g_t^{\circ 2}$.

Subtracting the two equations, the term $(1-\beta_2)g_t^{\circ 2}$ involving the current gradient cancels out exactly. 
The error is strictly propagated from the history terms:
\begin{equation}
    v_t - \tilde{v}_t = \beta_2 \left[ v_{t-1} - (\widehat{h}_{t-1})^{\circ 2} \right].
\end{equation}
To rigorously bound this, we introduce an auxiliary variable $h_{t-1}$, defined as the \textbf{exact} exponential moving average of the gradient magnitudes $|g|$ (i.e., $h_k = \beta_2 h_{k-1} + (1-\beta_2)|g_k|$).
We apply the triangle inequality to decompose the error into two structurally distinct terms:
\begin{equation}
    \Delta_v \le \beta_2 \left( \underbrace{\| v_{t-1} - (h_{t-1})^{\circ 2} \|_F}_{\text{Term I: Intrinsic Variance}} + \underbrace{\| (h_{t-1})^{\circ 2} - (\widehat{h}_{t-1})^{\circ 2} \|_F}_{\text{Term II: Approximation Error}} \right).
\end{equation}

\textit{Term I: Intrinsic Variance (Popoviciu's Bound).}
This term captures the discrepancy between ``the mean of squares'' and ``the square of the mean''.
Let us focus on a single parameter index $j$. 
The EMA formulation can be interpreted as a weighted expectation $\mathbb{E}_w$ over the history of gradients, where the weights sum to 1. 
Thus:
\begin{equation}
    v_{t-1, j} = \mathbb{E}_w [|g_{\tau, j}|^2] \quad \text{and} \quad (h_{t-1, j})^2 = (\mathbb{E}_w [|g_{\tau, j}|])^2.
\end{equation}
The difference $v_{t-1, j} - (h_{t-1, j})^2$ is precisely the variance of the random variable $|g_{\tau, j}|$ under the EMA probability measure.
Since the gradient magnitude is bounded, i.e., $|g_{\tau, j}| \in [0, G_\infty]$, we invoke Popoviciu's Inequality, which states that the variance of any bounded random variable on interval $[a, b]$ is bounded by $\frac{1}{4}(b-a)^2$.
Here, $[a, b] = [0, G_\infty]$. Thus, for every element $j$:
\begin{equation}
    \left| v_{t-1, j} - (h_{t-1, j})^2 \right| \le \frac{1}{4} (G_\infty - 0)^2 = \frac{1}{4} G_\infty^2.
\end{equation}
We generalize this to the full parameter matrix by computing the Frobenius norm over all $d$ elements:
\begin{equation}
    \| v_{t-1} - (h_{t-1})^{\circ 2} \|_F = \sqrt{\sum_{j=1}^d \left| v_{t-1, j} - (h_{t-1, j})^2 \right|^2} \le \sqrt{d \cdot \left(\frac{1}{4} G_\infty^2\right)^2} = \frac{\sqrt{d}}{4} G_\infty^2.
\end{equation}
We define this constant bound as $\sigma_{total}^2 \triangleq \frac{\sqrt{d}}{4} G_\infty^2$.

\textit{Term II: Approximation Error.}
This term represents the propagation of the low-rank reconstruction error through the squaring operation.
Consider the function $f(x) = x^2$. For inputs restricted to the domain $[0, G_\infty]$, the derivative is bounded by $|f'(x)| = |2x| \le 2G_\infty$. 
By the Mean Value Theorem, the function is Lipschitz continuous with constant $L_{sq} = 2G_\infty$.
Applying this element-wise to the vectors $h_{t-1}$ and $\widehat{h}_{t-1}$:
\begin{equation}
    \left| (h_{t-1, j})^2 - (\widehat{h}_{t-1, j})^2 \right| \le 2G_\infty \left| h_{t-1, j} - \widehat{h}_{t-1, j} \right|.
\end{equation}
Squaring both sides and summing over all $j$ to recover the Frobenius norm:
\begin{equation}
    \sum_{j=1}^d \left| (h_{t-1, j})^2 - (\widehat{h}_{t-1, j})^2 \right|^2 \le 4G_\infty^2 \sum_{j=1}^d \left| h_{t-1, j} - \widehat{h}_{t-1, j} \right|^2.
\end{equation}
Taking the square root yields:
\begin{equation}
    \| (h_{t-1})^{\circ 2} - (\widehat{h}_{t-1})^{\circ 2} \|_F \le 2G_\infty \| h_{t-1} - \widehat{h}_{t-1} \|_F.
\end{equation}
Substituting the bound from Lemma \ref{lemma:recursion} ($\| h_{t-1} - \widehat{h}_{t-1} \|_F \le \mathcal{E}_{bound}$), we have:
\begin{equation}
    \text{Term II} \le 2G_\infty \mathcal{E}_{bound}.
\end{equation}

Combining the bounds for Term I and Term II, we obtain the final bound for the second moment error:
\begin{equation}
    \Delta_v \le \beta_2 \left( \sigma_{total}^2 + 2G_\infty \mathcal{E}_{bound} \right).
\end{equation}

\end{proof}

\subsection{Convergence Analysis}

\begin{theorem}
\label{thm:convergence}
Let the step size be $\eta_t = \eta / \sqrt{t}$. 
Under Assumptions \ref{assump:regularity}, \ref{assump:rank}, and \ref{assump:descent}, LoRA-Pre Adam (with update rule $\phi(m, v) = m/\sqrt{v+\epsilon}$) converges to a neighborhood of a stationary point:
\begin{equation}
    \min_{1\le t \le T} \mathbb{E}[\|\nabla f(\theta_t)\|^2] \le \frac{C_{init}}{\sqrt{T}} + C_{noise} \left( \mathcal{E}_{bound} + \sigma_{total}^2 \right)^2,
\end{equation}
where $C_{init}$ depends on the initial function gap and $C_{noise}$ depends on the Lipschitz constants of the update rule.
\end{theorem}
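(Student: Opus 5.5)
The argument follows the standard descent-lemma template for adaptive methods, treating LoRA-Pre Adam as a perturbation of the reference Adam direction $u_t$ from Assumption~\ref{assump:descent}. Write the LoRA-Pre direction as $\tilde u_t = \phi(\tilde m_t,\tilde v_t)$ and the reference direction as $u_t=\phi(m_t,v_t)$. The update is $\theta_t=\theta_{t-1}-\eta_t\tilde u_t$. By $L$-smoothness (Assumption~\ref{assump:regularity}),
\[
f(\theta_{t}) \le f(\theta_{t-1}) - \eta_t\langle \nabla f(\theta_{t-1}), u_t\rangle - \eta_t\langle \nabla f(\theta_{t-1}), \tilde u_t-u_t\rangle + \tfrac{L}{2}\eta_t^2\|\tilde u_t\|_F^2 .
\]
I would handle the three terms as follows.

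\textbf{First term.} This is controlled in expectation by Assumption~\ref{assump:descent}, giving $-c\,\eta_t\mathbb{E}\|\nabla f\|_F^2$.

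\textbf{Cross term.} Apply Young's inequality in the form $|\langle a,b\rangle|\le \frac{c}{2}\|a\|^2+\frac{1}{2c}\|b\|^2$. This absorbs half of the descent and leaves $\frac{\eta_t}{2c}\|\tilde u_t-u_t\|_F^2$.

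\textbf{Quadratic term.} Bound $\|\tilde u_t\|_F\le \|\tilde m_t\|_F/\sqrt{\epsilon}$. Then $\|\tilde m_t\|_F\le \|m_t\|_F+\Delta_m\le G+\beta_1\mathcal{E}_{bound}$ by Lemma~\ref{lemma:joint_error}, so $\|\tilde u_t\|_F$ is bounded by a constant $U$.

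\textbf{Perturbation bound.} This is the step where the low-rank error enters. By the Lipschitz property of $\phi$ in Assumption~\ref{assump:regularity},
\[
\|\tilde u_t-u_t\|_F\le L_m\Delta_m+L_v\Delta_v .
\]
Lemma~\ref{lemma:joint_error} then gives
\[
\|\tilde u_t-u_t\|_F\le L_m\beta_1\mathcal{E}_{bound}+L_v\beta_2\bigl(2G_\infty\mathcal{E}_{bound}+\sigma_{total}^2\bigr)\le K\,(\mathcal{E}_{bound}+\sigma_{total}^2),
\]
with $K=\max\{L_m\beta_1+2L_v\beta_2G_\infty,\;L_v\beta_2\}$. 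Squaring this produces exactly the $(\mathcal{E}_{bound}+\sigma_{total}^2)^2$ term in the statement.

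\textbf{Telescoping.} Summing the resulting per-step inequality
\[
\tfrac{c}{2}\eta_t\mathbb{E}\|\nabla f(\theta_{t-1})\|^2\le \mathbb{E}[f(\theta_{t-1})-f(\theta_t)]+\tfrac{\eta_t}{2c}K^2(\mathcal{E}_{bound}+\sigma_{total}^2)^2+\tfrac{L}{2}\eta_t^2U^2
\]
over $t=1,\dots,T$ gives
\[
\tfrac{c}{2}\sum_t\eta_t\,\mathbb{E}\|\nabla f\|^2\le f(\theta_0)-f^*+\tfrac{K^2}{2c}(\mathcal{E}_{bound}+\sigma_{total}^2)^2\sum_t\eta_t+\tfrac{LU^2}{2}\sum_t\eta_t^2 .
\]
Lower-bound the left side by $\min_t\mathbb{E}\|\nabla f\|^2\cdot\sum_t\eta_t$ and divide through by $\sum_t\eta_t\ge \eta\sqrt{T}$. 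The initial gap becomes $O(1/\sqrt T)$. The bias term survives undivided as $C_{noise}(\cdots)^2$ with $C_{noise}=K^2/c^2$.

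\textbf{Main obstacle.} The $\sum\eta_t^2=\eta^2\sum 1/t\sim\eta^2\log T$ contribution yields $\log T/\sqrt T$ rather than a clean $1/\sqrt T$. I would first try to fold the logarithm into $C_{init}$ by letting it depend mildly on $T$. If that is unacceptable, I would switch to a constant step $\eta/\sqrt T$ over horizon $T$, which gives exactly $C_{init}/\sqrt T$.

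A second subtlety is that Assumption~\ref{assump:descent} is stated at $\theta_t$, whereas $u_t$ is built from $g_t=\nabla f(\theta_{t-1})$. I would align the indices by evaluating the descent inequality at the point where the gradient is taken, and state this alignment explicitly.
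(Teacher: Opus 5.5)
Your proposal is correct and follows the paper's proof essentially step for step: descent lemma, Young's inequality on the cross term, the direction-error bound $\|\tilde u_t-u_t\|_F\le K_1\mathcal{E}_{bound}+K_2\sigma_{total}^2$ from Lemma~\ref{lemma:joint_error}, then telescoping and dividing by $\sum_t\eta_t$, which gives the same $C_{noise}=\max(K_1,K_2)^2/c^2$. The paper's argument has the same $\log T/\sqrt{T}$ term you flag and simply lets it vanish as $T\to\infty$ before writing $C_{init}/\sqrt{T}$, so your first remedy matches the paper; your bound $\|\tilde u_t\|_F\le (G+\beta_1\mathcal{E}_{bound})/\sqrt{\epsilon}$ is also more careful than the paper's $G/\sqrt{\epsilon}$.
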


\begin{proof}
\textbf{1. Bounding the Update Direction Error}

Let $u_t = \phi(m_t, v_t)$ and $\tilde{u}_t = \phi(\tilde{m}_t, \tilde{v}_t)$ be the ideal and actual update directions, respectively. We analyze the theoretical update function $\phi(m, v) = \frac{m}{\sqrt{v+\epsilon}}$ (using the smoothed denominator for Lipschitz continuity). 
The partial derivatives are bounded as follows:
\begin{itemize}
    \item W.r.t $m$: $\|\nabla_m \phi\| \le \frac{1}{\sqrt{\epsilon}} \triangleq L_m$. 
    \item W.r.t $v$: $\|\nabla_v \phi\| \le \sup_{v \ge 0} \left\| \frac{-m}{2(v+\epsilon)^{1.5}} \right\| \le \frac{G}{2\epsilon^{1.5}} \triangleq L_v$.
\end{itemize}
Using the moment error bounds from Lemma \ref{lemma:joint_error}, the total direction error $\xi_t = \|u_t - \tilde{u}_t\|_F$ satisfies:
\begin{align}
    \xi_t &\le L_m \Delta_m + L_v \Delta_v \nonumber \\
    &\le \frac{1}{\sqrt{\epsilon}}(\beta_1 \mathcal{E}_{bound}) + \frac{G}{2\epsilon^{1.5}} \beta_2 (2G_\infty \mathcal{E}_{bound} + \sigma_{total}^2) \nonumber \\
    &\le \underbrace{\left( \frac{\beta_1}{\sqrt{\epsilon}} + \frac{G G_\infty \beta_2}{\epsilon^{1.5}} \right)}_{K_1} \mathcal{E}_{bound} + \underbrace{\left( \frac{G \beta_2}{2\epsilon^{1.5}} \right)}_{K_2} \sigma_{total}^2 \triangleq \xi_{total}.
\end{align}
Here, $\xi_{total}$ is a uniform deterministic upper bound on the direction error.

\textbf{2. One-Step Descent Analysis in Expectation}

Since the objective function $f$ is $L$-smooth, we apply the Descent Lemma conditioned on the current state $\theta_t$:
\begin{equation}
    \mathbb{E}_t [f(\theta_{t+1})] \le f(\theta_t) - \eta_t \mathbb{E}_t [\langle \nabla f(\theta_t), \tilde{u}_t \rangle] + \frac{L \eta_t^2}{2} \mathbb{E}_t [\|\tilde{u}_t\|^2].
\end{equation}
Let $G_{step} = G/\sqrt{\epsilon}$ be the bound on the update norm $\|\tilde{u}_t\|$. We decompose the inner product term:
\begin{equation}
    -\mathbb{E}_t [\langle \nabla f, \tilde{u}_t \rangle] = -\mathbb{E}_t [\langle \nabla f, u_t \rangle] + \mathbb{E}_t [\langle \nabla f, u_t - \tilde{u}_t \rangle].
\end{equation}
Using Assumption \ref{assump:descent} ($\mathbb{E}_t[\langle \nabla f, u_t \rangle] \ge c \|\nabla f\|^2$) and applying Cauchy-Schwarz followed by Young's Inequality ($ab \le \frac{c}{2}a^2 + \frac{1}{2c}b^2$) to the error term:
\begin{align}
    \mathbb{E}_t [\langle \nabla f, u_t - \tilde{u}_t \rangle] &\le \|\nabla f\| \xi_{total} \nonumber \\
    &\le \frac{c}{2} \|\nabla f\|^2 + \frac{1}{2c} \xi_{total}^2.
\end{align}
Substituting this back into the descent inequality, the descent term $-c\|\nabla f\|^2$ is partially offset by the error:
\begin{align}
    \mathbb{E}_t[f(\theta_{t+1})] &\le f(\theta_t) - \eta_t \left( c \|\nabla f\|^2 - \frac{c}{2} \|\nabla f\|^2 - \frac{1}{2c} \xi_{total}^2 \right) + \frac{L \eta_t^2}{2} G_{step}^2 \nonumber \\
    &= f(\theta_t) - \frac{c \eta_t}{2} \|\nabla f\|^2 + \frac{\eta_t}{2c} \xi_{total}^2 + \frac{L \eta_t^2}{2} G_{step}^2.
\end{align}

\textbf{3. Global Convergence}

Taking the total expectation and summing from $t=1$ to $T$:
\begin{equation}
    \sum_{t=1}^T \frac{c \eta_t}{2} \mathbb{E}[\|\nabla f(\theta_t)\|^2] \le f(\theta_1) - f^* + \frac{\xi_{total}^2}{2c} \sum_{t=1}^T \eta_t + \frac{L G_{step}^2}{2} \sum_{t=1}^T \eta_t^2.
\end{equation}
Dividing by $\frac{c}{2} S_T$ (where $S_T = \sum \eta_t$), we isolate the minimum gradient norm:
\begin{equation}
    \min_{t} \mathbb{E}[\|\nabla f\|^2] \le \frac{2(f(\theta_1) - f^*)}{c S_T} + \frac{1}{c^2} \xi_{total}^2 + \frac{L G_{step}^2}{c} \frac{\sum \eta_t^2}{S_T}.
\end{equation}
Given $\eta_t = \eta/\sqrt{t}$, we have $S_T \ge \sqrt{T}$ and $\sum \eta_t^2 \approx \ln T$. 
Thus, as $T \to \infty$, the first and third terms vanish ($O(1/\sqrt{T})$ and $O(\ln T / \sqrt{T})$), leaving the constant error floor.
Defining $C_{noise} = \frac{1}{c^2} \max(K_1, K_2)^2$ (noting that $\xi_{total}^2 \le 2\max(K_1, K_2)^2(\mathcal{E}_{bound} + \sigma_{total}^2)^2$), we conclude:
\begin{equation}
    \min_{t} \mathbb{E}[\|\nabla f\|^2] \le \frac{C_{init}}{\sqrt{T}} + C_{noise} (\mathcal{E}_{bound} + \sigma_{total}^2)^2.
\end{equation}
This confirms that the algorithm converges to a neighborhood determined by the approximation quality $\mathcal{E}_{bound}$ and intrinsic variance $\sigma_{total}^2$.

\end{proof}

\section{Additional Experimental Results}
\label{sec:ablation_exp}

\subsection{Hyperparameter Sensitivity Analysis: The Coupling of $\beta$ and $\gamma$}
\label{sec:sensitivity_beta}

In this section, we evaluate the sensitivity of LoRA-Pre Adam to variations in its momentum hyperparameters. 
A key design principle of LoRA-Pre is that the low-rank update coefficients $(\gamma_1, \gamma_2)$ are \textit{not} independent hyperparameters requiring separate tuning. 
Instead, they are analytically coupled with the standard Adam momentum coefficients $(\beta_1, \beta_2)$ to ensure consistent decay dynamics.

\paragraph{Derivation of the Coupling Strategy.}
Formally, the update rules for the low-rank momentum components $m_B$ and $m_A$ are defined as:
\begin{align}
    m'_B &\gets (1-\gamma_1)\cdot m_B + \gamma_1\cdot g m_A^T (m_A m_A^T)^{-1}, \\
    m'_A &\gets (1-\gamma_1)\cdot m_A + \gamma_1\cdot (m_B^T m_B)^{-1} m_B^T g.
\end{align}
When reconstructing the effective full-rank momentum $m \approx m_B m_A$, the decay behavior can be approximated by expanding the product of the updated factors. Ignoring higher-order cross terms, the effective decay rate is the product of the individual decay rates:
\begin{equation}
    m' = m'_B m'_A \approx (1 - \gamma_1)^2\cdot m_B m_A + \mathcal{O}(\gamma_1^2) \approx (1 - \gamma_1)^2\cdot m.
\end{equation}
To align this trajectory with standard Adam optimization (where momentum decays by $\beta_1$), we enforce the coupling constraint $(1 - \gamma_1)^2 = \beta_1$. 
Similarly, for the second moment, since LoRA-Pre approximates the gradient magnitude $\widehat{h} \approx |g|$ and subsequently squares it ($\tilde{v} = \widehat{h}^{\circ 2}$), the decay rate compounds twice, leading to the condition $(1 - \gamma_2)^4 = \beta_2$. 
Consequently, the learning dynamics are strictly governed by $\beta$, eliminating the need to grid-search $\gamma$.

\paragraph{Empirical Verification.}
We conducted sensitivity studies on the 60M parameter model by varying $\beta_1$ and $\beta_2$ around their standard default values ($\beta_1=0.9, \beta_2=0.95$). 
The results are summarized in Table~\ref{tab:beta_sensitivity}.

\begin{table}[h]
    \centering
    \caption{\textbf{Sensitivity Analysis of Momentum Parameters $\beta$.} We report the validation perplexity on the 60M model. The method exhibits stability around the default settings derived from standard Adam ($\beta_1=0.9, \beta_2=0.95$). Extreme values (e.g., $\beta \to 1$) cause the implicit $\gamma$ to vanish, leading to training instability.}
    \label{tab:beta_sensitivity}
    \begin{tabular}{llcc}
        \toprule
        \textbf{Parameter} & \textbf{Value} & \textbf{Perplexity} & \textbf{Status} \\
        \midrule
        \multirow{3.5}{*}{$\beta_1$ \small{(fix $\beta_2=0.95$)}} 
          & 0.90 (\textit{Default}) & \textbf{32.57} & \textbf{Optimal} \\
          & 0.95 & 37.62 & Stable \\
          & 0.99 & 1458.92 & Diverged \\
        \midrule
        \multirow{3.5}{*}{$\beta_2$ \small{(fix $\beta_1=0.90$)}} 
          & 0.90 & 34.61 & Stable \\
          & 0.95 (\textit{Default}) & \textbf{32.57} & \textbf{Optimal} \\
          & 0.999 & 1301.58 & Diverged \\
        \bottomrule
    \end{tabular}
\end{table}

As shown in Table \ref{tab:beta_sensitivity}, LoRA-Pre achieves the best performance at the configuration inherited from standard Adam. While the optimizer is robust within a reasonable range, extreme values (e.g., $\beta_1=0.99$ or $\beta_2=0.999$) lead to numerical instability. 
This is expected: as $\beta \to 1$, the derived update rate $\gamma$ approaches zero (e.g., for $\beta_2=0.999$, $\gamma_2 \approx 2.5\times 10^{-4}$), causing the low-rank factors to adapt too slowly to track the shifting gradient subspace. 
This confirms that our coupling strategy is effective, but the ``sweet spot'' for low-rank adaptation dynamics aligns with, but is more sensitive than, full-rank optimization.

\section{Statement of the Use of Large Language Models}
\label{sec:llm_statement}
The use of LLMs in this work was restricted to paper writing assistance. 
They were not used to generate results, derive proofs, or conduct analysis without human verification. 
The disclosure here, as well as in the submission form, fulfills the ICLR requirement that all contributions of LLMs be acknowledged transparently.

\end{document}

%% file: math_commands.tex
\usepackage{amsmath,amsfonts,bm}

\def\eqref#1{equation~\ref{#1}}

\def\1{\bm{1}}

\DeclareMathAlphabet{\mathsfit}{\encodingdefault}{\sfdefault}{m}{sl}
\SetMathAlphabet{\mathsfit}{bold}{\encodingdefault}{\sfdefault}{bx}{n}

